  \providecommand\BibTeX{{%
    \normalfont B\kern-0.5em{\scshape i\kern-0.25em b}\kern-0.8em\TeX}}}
\newtheorem{problem}{Problem}
\newtheorem{claim}{Claim}
\newtheorem{informal}{Informal Problem}
\newcommand{\hide}[1]{}
\newcommand{\bit}{\begin{compactitem}}
\newcommand{\eit}{\end{compactitem}}
\newcommand{\ben}{\begin{compactenum}}
\newcommand{\een}{\end{compactenum}}
\newcommand{\beq}{\begin{equation}}
\newcommand{\eeq}{\end{equation}}
\newcommand{\indep}{\perp \!\!\! \perp}
\newcounter{x}\setcounter{x}{1}
\newcommand{\pv}{PV}
\newcommand{\flag}{O} 
\newcommand{\lbl}{Y}
\newcommand{\samples}{N}
\newcommand{\nfeats}{d}
\newcommand{\xobs}{X}
\newcommand{\R}{\mathbb{R}} 
\newcommand{\prob}{P}
\newcommand{\s}{s}
\newcommand{\ndcg}{{\sc NDCG}\xspace}
\newcommand{\prem}{{preprocessing}\xspace}
\newcommand{\inm}{{in-processing}\xspace}
\newcommand{\base}{{\sc base}\xspace}
\newcommand{\corr}{{\sc FairOD-L}\xspace} 
\newcommand{\corrscore}{{\sc FairOD-C}\xspace} 
\newcommand{\method}{{\sc FairOD}\xspace}
\newcommand{\disparate}{{\sc dir}\xspace}
\newcommand{\lfr}{{\sc lfr}\xspace}
\newcommand{\rw}{{\sc rw}\xspace}
\newcommand{\crl}{{\sc arl}\xspace}
\newcommand{\loss}{\mathcal{L}}
\newcommand{\lossC}{\mathcal{L}_{SP}}
\newcommand{\lossF}{\mathcal{L}_{GF}}
\newcommand{\enc}{G_E}
\newcommand{\dec}{G_D}
\newcommand{\aez}{Z}
\newcommand{\recon}{G}
\newcommand{\numdatasets}{{4\xspace}}
\newcommand{\synthone}{{\texttt{Synth1}}\xspace}
\newcommand{\synthtwo}{\texttt{Synth2}\xspace}
\newcommand{\adult}{\texttt{Adult}\xspace}
\newcommand{\credit}{\texttt{Credit}\xspace}
\newcommand{\tweets}{\texttt{Tweets}\xspace}
\newcommand{\ads}{\texttt{Ads}\xspace}
\newcommand{\fairness}{{\small \sffamily Fairness}\xspace}
\newcommand{\gf}{{\small \sffamily GroupFidelity}\xspace}
\newcommand{\topkrank}{{\small \sffamily Top-$k$ Rank Agreement}\xspace}
\newcommand{\apratio}{{\small \sffamily AP-ratio}\xspace}
\newcommand{\prratio}{{\small \sffamily AUC-ratio}\xspace}
\newcommand{\auc}{{\small \sffamily AUC}\xspace}
\newcommand{\ap}{{\small \sffamily AP}\xspace}
\newcommand{\fr}{{\small \sffamily Flag-rate}\xspace}
\long\def\ignore#1{}
\newsavebox\CBox
\newcommand{\bluecolor}[1]{{\textsf{\textcolor{blue!90}{#1}}}}
\begin{document}
	\fancyhead{}
\title{\method: Fairness-aware Outlier Detection}

\author{Shubhranshu Shekhar}
\email{shubhras@andrew.cmu.edu}
\affiliation{%
 \institution{Carnegie Mellon University}
 \city{Pittsburgh}
 \state{PA}
  \country{USA}
}

\author{Neil Shah}
\email{nshah@snap.com}
\affiliation{%
  \institution{Snap Inc.}
  \city{Seattle}
  \state{WA}
  \country{USA}}

\author{Leman Akoglu}
\email{lakoglu@andrew.cmu.edu}
\affiliation{%
   \institution{Carnegie Mellon University}
  \city{Pittsburgh}
  \state{PA}
  \country{USA}
}

\renewcommand{\shortauthors}{Shekhar, Shah and Akoglu}

\begin{abstract}

Fairness and Outlier Detection (OD) are closely related, as it is exactly the goal of OD to spot rare, minority samples in a given population. However, when being a minority (as defined by protected variables, such as race/ethnicity/sex/age) does not reflect positive-class membership (such as criminal/fraud),  OD produces unjust outcomes. Surprisingly, fairness-aware OD has been almost untouched in prior work, as fair machine learning literature mainly focuses on supervised settings.
Our work aims to bridge this gap.  Specifically, we develop desiderata  
capturing well-motivated fairness criteria for OD,
and systematically formalize the fair OD problem. Further, guided by our desiderata, we propose \method, a fairness-aware outlier detector that has the following desirable properties: \method (1) 
exhibits treatment parity
at test time, (2) aims to flag equal proportions of samples from all groups (i.e. obtain group fairness, via statistical parity), and (3) strives to flag truly high-risk 
samples within each group. 
Extensive experiments on a diverse set of synthetic and real world datasets show that \method produces outcomes that are fair with respect to protected variables, while performing comparable to (and in some cases, even better than) fairness-agnostic detectors in terms of detection performance. 
\end{abstract}

\begin{CCSXML}
	<ccs2012>
	<concept>
	<concept_id>10010147.10010257.10010321</concept_id>
	<concept_desc>Computing methodologies~Machine learning algorithms</concept_desc>
	<concept_significance>300</concept_significance>
	</concept>
	<concept>
	<concept_id>10010147.10010257.10010258.10010260.10010229</concept_id>
	<concept_desc>Computing methodologies~Anomaly detection</concept_desc>
	<concept_significance>500</concept_significance>
	</concept>
	</ccs2012>
\end{CCSXML}

\ccsdesc[300]{Computing methodologies~Machine learning algorithms}
\ccsdesc[500]{Computing methodologies~Anomaly detection}

\keywords{fair outlier detection;
	outlier detection;
	anomaly detection;
	algorithmic fairness;
	end-to-end detector;
	deep learning}

\maketitle

\section{Introduction}
\label{sec:introduction}

Fairness in machine learning (ML) has received a surge of attention in the recent years. The community has largely focused on designing different notions of fairness \cite{barocas2017fairness, journals/corr/abs-1808-00023, verma2018fairness} mainly tailored towards supervised ML problems~\cite{hardt2016equality, zafar2017fairness, goel2018non}.  However, perhaps surprisingly, fairness in the context of outlier detection~(OD) is vastly understudied. OD is critical for numerous applications in security~\cite{gogoi2011survey, zavrak2020anomaly, zhang2006anomaly}, finance~\cite{van2015apate,lee2020autoaudit,johnson2019medicare}, healthcare~\cite{luo2010unsupervised, bosc2003automatic} etc. and is widely used for detection of rare positive-class instances. 

\textbf{Outlier detection for ``policing''}:~ 
In such critical systems, OD is often used to flag instances that reflect \emph{riskiness}, 
which are then ``{policed}'' (or audited) by human experts. For example, law enforcement agencies might employ automated surveillance systems in public spaces to spot suspicious individuals based on visual characteristics, who are subsequently stopped and frisked. 
Alternatively, in the financial domain, analysts can police fraudulent-looking claims, and corporate trust and safety employees can police bad actors on social networks.
\begin{figure*}[t!]
	\centering
	{\includegraphics[height=1.42in]{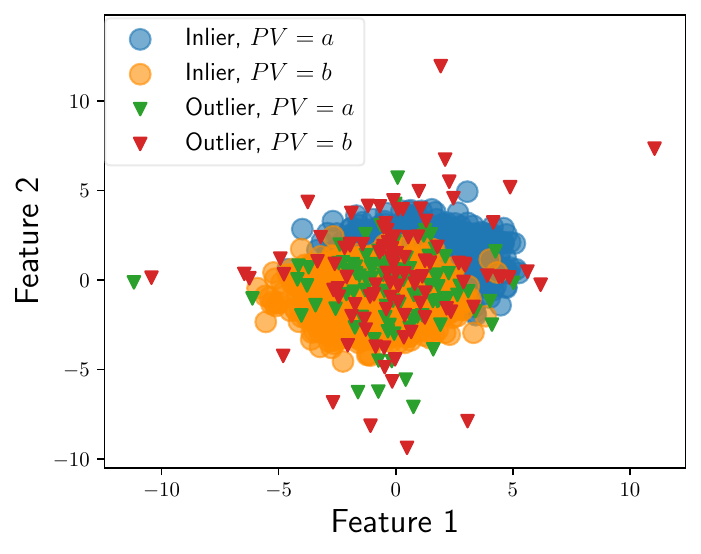}}
	{\includegraphics[height=1.42in]{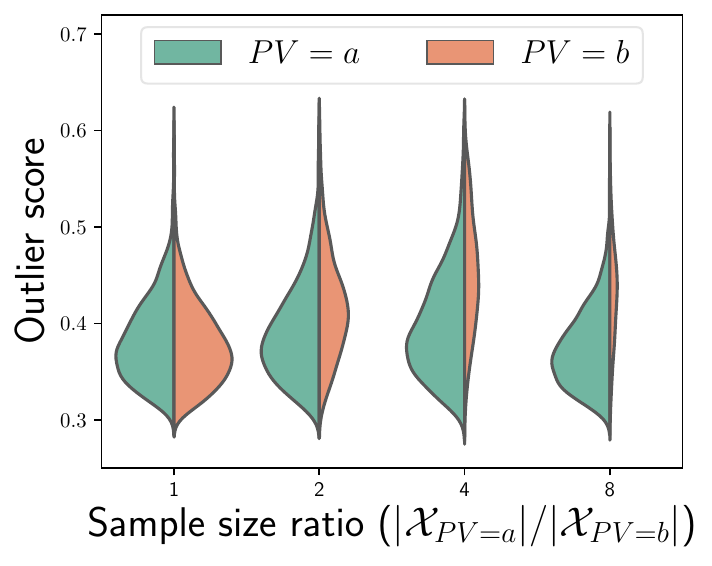}}
	{\includegraphics[height=1.42in]{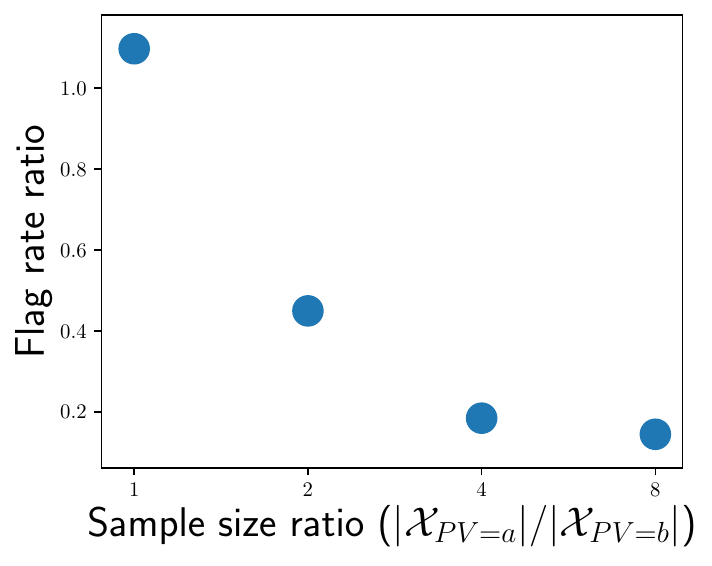}}
	\caption{ (left) Simulated 2-dim. data with equal sized groups i.e. $|\mathcal{X}_{PV=a}|{=}|\mathcal{X}_{PV=b}|$. (middle) Group score distributions induced by $\pv=a$ and $\pv=b$ are plotted by varying the simulated $|\mathcal{X}_{PV=a}|/|\mathcal{X}_{PV=b}|$ ratio. Notice that minority group ($\pv=b$) receives larger outlier scores as the size ratio increases. (right) Flag rate ratio of the groups for the varying sample size ratio $|\mathcal{X}_{PV=a}|/|\mathcal{X}_{PV=b}|$. As we increase size disparity, the minority group is ``policed'' (i.e. flagged) comparatively more.\label{fig:disparity}}
\end{figure*}

\textbf{Group sample size disparity yields unfair OD}:~ Importantly, outlier detectors are designed exactly to spot rare, \emph{statistical minority} samples\footnote{In this work, the words sample, instance, and observation are used interchangeably throughout text.} with the hope that outlierness reflects riskiness, which prompts their bias against \textit{societal minorities} (as defined by race/ethnicity/sex/age/etc.) as well, since minority group sample size is by definition small. 

However, when minority status (e.g. Hispanic) does not reflect positive-class membership (e.g. fraud), OD produces {\bfseries{\em unjust outcomes, by overly flagging the instances from the minority gro-\\ups as outliers.}} This conflation of statistical and societal minorities can become an ethical matter. 


\textbf{Unfair OD leads to disparate impact}:~ 
What would happen downstream if we did not strive for \textit{fairness-aware} OD given the existence of societal minorities?
OD models' inability to distinguish societal minorities (as induced by so-called \textit{protected} variables ($\pv$s)), from statistical minorities, contributes to the likelihood of minority group members being flagged as outliers~(see Fig. \ref{fig:disparity}). This is further exacerbated by proxy variables which partially-redundantly encode (i.e. correlate with) the $\pv$(s)\ignore{PV(s)}, by increasing the number of subspaces in which minorities stand out.
The result is \emph{overpolicing} due to over-representation of minorities in OD outcomes.  Note that overpolicing the minority group also implies underpolicing the majority group given limited policing capacity and constraints. 

Overpolicing can also feed \textit{back} into a system when the policed outliers are used as labels in downstream supervised tasks. Alarmingly, this initially skewed sample (due to unfair OD), may be amplified through a feedback loop via predicting policing where more outliers are identified in more heavily policed groups.  Given that OD's use in societal applications has direct bearing on social well-being, ensuring that OD-based outcomes are non-discriminatory is pivotal.  This demands the design of fairness-aware OD models, which our work aims to address. 

\textbf{Prior research and challenges}:~ Abundant work on algorithm fairness has focused on supervised ML tasks~\cite{beutel2019putting, hardt2016equality, zafar2017fairness}. Numerous notions of fairness ~\cite{barocas2017fairness, verma2018fairness} have been explored in such contexts, each with their own challenges in achieving equitable decisions~\cite{journals/corr/abs-1808-00023}. In contrast, there is little to no work on addressing fairness in \textit{unsupervised} OD. Incorporating fairness into OD is challenging, in the face of (1) many possibly-incompatible notions of fairness and, (2) the absence of ground-truth 
outlier labels. 

The two works tackling\footnote{\cite{conf/ecai/DavidsonR20} aims to \textit{quantify} fairness of OD model outcomes \textit{post hoc}, which thus has a different scope.} unfairness in the OD literature are by P and Abraham~\cite{p2020fairOD} which proposes an ad-hoc procedure to introduce fairness specifically to the LOF algorithm~\cite{breunig2000lof}, and Zhang and Davidson~\cite{zhang2020towards} (concurrent to our work) which proposes an adversarial training based deep SVDD detector. Amongst other issues (see Sec. \ref{sec:related}), the approach proposed in \cite{p2020fairOD} invites disparate treatment, necessitating explicit use of $\pv$ \textit{at decision time}, leading to taste-based discrimination~\cite{corbett2018measure} that is unlawful in several critical applications. On the other hand, the approach in \cite{zhang2020towards} has several drawbacks (see Sec. \ref{sec:related}), and in light of unavailable implementation, we include a similar baseline called ~\crl{} that we compare against our proposed method.

Alternatively, one could re-purpose existing fair representation learning techniques~\cite{zemel2013learning, edwards2015censoring, beutel2017data} as well as data preprocessing strategies~\cite{kamiran2012data, feldman2015certifying} for subsequent fair OD.
However, as we show in Sec. \ref{sec:exp} and discuss in Sec. \ref{sec:related}, isolating representation learning from the detection task is suboptimal, largely (needlessly) sacrificing detection performance for fairness. 

\begin{figure}[t!]
	\centering
	\includegraphics[width=\columnwidth]{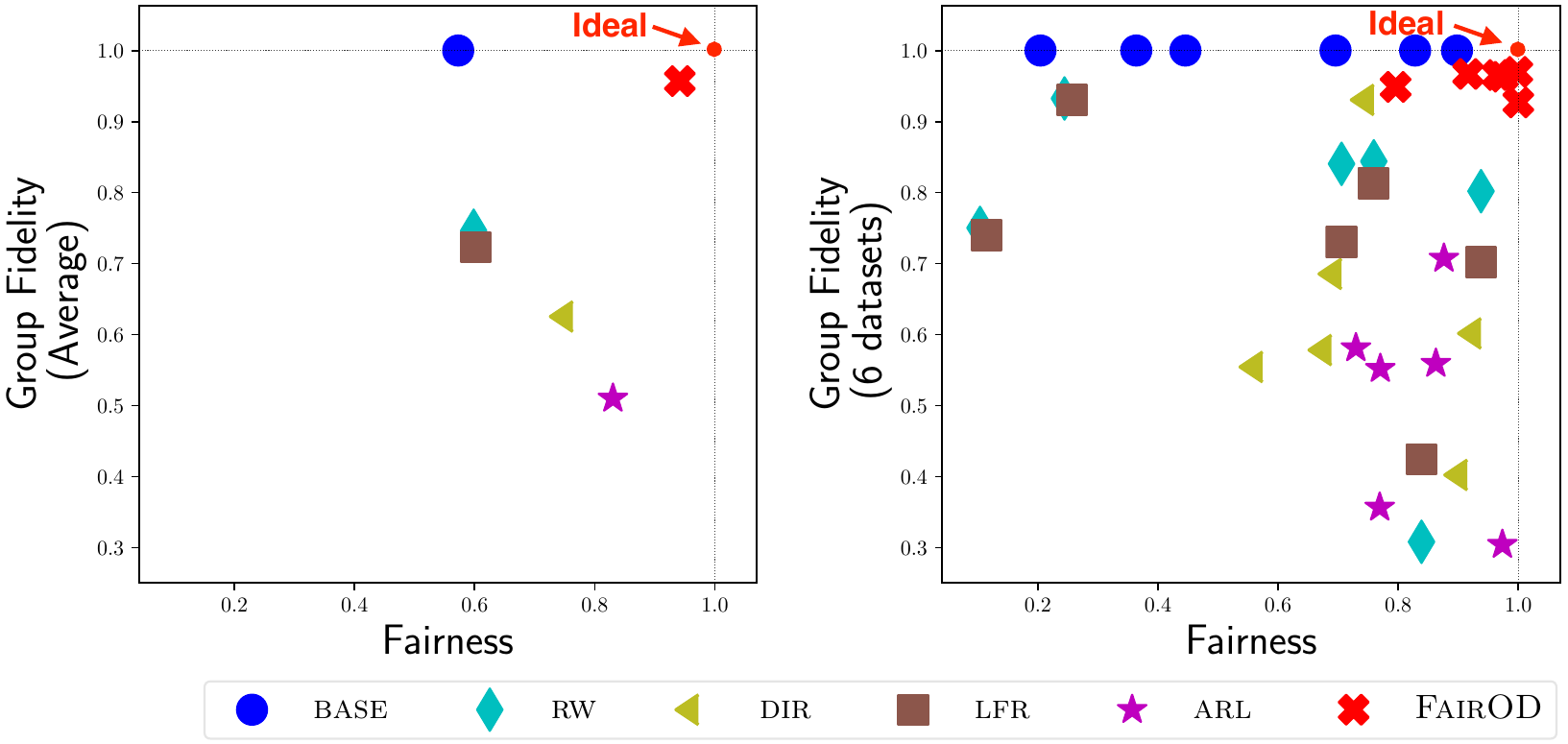}
	\caption{\fairness (statistical parity) vs. \gf (group-level rank preservation) of baselines and our proposed \method (red cross), (left) averaged across 6 datasets, and (right) on individual datasets. \method outperforms existing solutions (tending towards ideal), achieving fairness while preserving group fidelity from the \base detector. See Sec. \ref{sec:exp} for more details.\label{fig:combinedrank}}
\end{figure}

\textbf{Our contributions}:~
Our work strives to design a fairness-aware OD model to achieve equitable policing across groups and avoid an unjust conflation of statistical and societal minorities.  We summarize our main contributions as follows: 

\begin{compactenum}
	\item{\bf Desiderata \& Problem Definition for Fair Outlier Detection:}  We identify 5 properties characterizing detection quality and fairness in OD as desiderata for fairness-aware detectors. 
	We discuss their justifiability and achievability, based on which we formally define the (unsupervised)  fairness-aware OD problem (Sec. \ref{sec:prelim}).

		\item {\bf Fairness Criteria \& New, 
		Fairness-Aware OD Model: } We introduce well-motivated fairness criteria and give mathematical objectives that can be optimized to obey the desiderata. These criteria are universal, in that they can be embedded into the objective of any end-to-end OD model. We propose \method, a new detector which directly incorporates the prescribed criteria into its training. 
		Notably, \method (1) aims to equalize flag rates across groups, achieving group fairness via statistical parity, while (2) striving to flag truly high-risk samples within each group, and (3) avoiding
		disparate treatment.
		(Sec. \ref{sec:method})
		
	\item {\bf Effectiveness on Real-world Data: } We  apply  \method on several real-world and synthetic datasets with diverse applications such as credit risk assessment and hate speech detection. Experiments demonstrate \method's effectiveness in achieving both fairness goals  (Fig.~\ref{fig:combinedrank}) as well as accurate detection~(Fig.~\ref{fig:accuracy}, Sec. \ref{sec:exp}), significantly outperforming alternative solutions. 

\end{compactenum}

{\bf Reproducibility:} All of our source code and  datasets are shared publicly at \texttt{\url{https://tinyurl.com/fairOD}}. 

\section{Desiderata for Fair Outlier Detection}
\label{sec:prelim}
\subsubsection*{Notation}
 We are given $\samples$ samples (also, observations or instances) 
$\mathcal{X} = \{\xobs_i\}_{i=1}^\samples  \subseteq \mathbb{R}^\nfeats$ 
as the input for OD where $\xobs_i \in \R^\nfeats$ denotes the feature representation for observation $i$. Each observation is additionally associated with a binary\footnote{For simplicity of presentation, we consider a single, binary protected variable (PV). We discuss extensions to multi-valued PV and multi-attribute PVs in Sec. \ref{sec:method}.} protected (also, sensitive) variable, 
$\mathcal{PV} = \{\pv_i\}_{i=1}^\samples$, 
where $\pv_i \in \{a, b\}$ identifies two groups -- the majority ($\pv_i = a$) group and the minority ($\pv_i = b$) group.
We use $\mathcal{Y}=\{Y_i\}_{i=1}^\samples$,  $\lbl_i \in \{0, 1\}$, to denote the \emph{unobserved} ground-truth binary labels for the observations where, for exposition, $\lbl_i=1$ denotes an outlier (positive outcome) and $\lbl_i=0$ denotes an inlier (negative outcome).
We use $\flag :  {X} \mapsto \{0,1\}$ to denote the predicted outcome of an outlier detector, and $\s: {X} \mapsto \mathbb{R}$ to capture the corresponding numerical outlier score  as the estimate of the outlierness.  Thus, $\flag(X_i), \s(X_i)$ respectively indicate predicted outlier label and outlier score for sample $X_i$. We use $\mathcal{O} = \{ \flag(X_i) \}_{i=1}^\samples$ and $\mathcal{S} = \{ \s(X_i) \}_{i=1}^\samples$ to denote the set of all predicted labels and scores from a given model without loss of generality.  Note that we can derive $\flag(X_i)$ from a simple thresholding of $\s(X_i)$.  We routinely drop $i$-subscripts to refer to properties of a single sample without loss of generality.
We denote the group \textit{base rate} (or {prevalence}) of outlierness  as $br_a=\prob(Y=1|PV=a)$ for the majority group. Finally, we let $fr_a=\prob(O=1|PV=a)$ depict the \textit{flag rate} of the detector for the majority group. Similar definitions extend to the minority group with $\pv=b$. 
Table~\ref{tab:symb} gives a list of the notations frequently used  throughout the paper.

Having presented the problem setup and notation, we state our fair OD problem (informally) as follows.

\begin{informal}[\bluecolor{Fair Outlier Detection}]
	Given samples $\mathcal{X}$ and protected variable values $\mathcal{PV}$,
	estimate outlier scores $\mathcal{S}$ and assign outlier labels $\mathcal{O}$, 
	such that
	\bit
		\item[(i)] assigned labels and scores are ``fair'' w.r.t. the $\pv$, and
		\item[(ii)] higher scores correspond to higher riskiness encoded by the underlying (unobserved) $\mathcal{\lbl}$. 
	\eit
\end{informal}

How can we design a fairness-aware OD model that is \emph{not biased} against minority groups? What constitutes a ``fair'' outcome in OD, that is, what would characterize fairness-aware OD? What specific notions of fairness are most applicable to OD?

To approach the problem and address these motivating questions, we first propose a list of desired properties that an ideal fairness-aware detector should satisfy, and whether, in practice, the desired properties can be enforced 
followed by 
our proposed solution, \method.

\begin{table}
	\caption{Frequently used symbols and definitions. \label{tab:symb}}
\resizebox{\columnwidth}{!}{
	\centering
	\begin{tabular}{ll}
		\toprule
		Symbol & Definition \\
		\midrule
		$\xobs$ & $d$-dimensional feature representation of an observation\\
		$\lbl$ & true label of an observation, w/ values 0 (inlier), 1 (outlier)\\ 
		$\pv$ &  binary protected (or sensitive) variable, w/ groups $a$ (majority), $b$ (minority) \\ 
		$\flag$ & detector-assigned label to an observation, w/ value 1 (predicted/flagged outlier)\\
		$br_v$ & base rate of/fraction of ground-truth outliers in group $v$, i.e. $br_v=\prob(Y=1|PV=v)$\\
		$fr_v$ & flag rate of/fraction of flagged observations in group $v$, i.e $fr_v=\prob(O=1|PV=v)$\\
		\bottomrule
	\end{tabular}
}
\end{table}

\subsection{Proposed Desiderata}

\label{subsec:desi}
	
	\noindent {D1. \bfseries Detection effectiveness: } We require an OD model to be accurate at detection, such that the scores assigned to the instances by OD are well-correlated with the ground-truth outlier labels. Specifically, OD benefits the policing 
	effort only when the detection rate (also, precision) is strictly larger than the \emph{base rate} (also, prevalence), that is, 
	\beq 
	\label{eq:ppv}
	\prob(\lbl = 1\; |\; \flag = 1) > \prob(\lbl =1) \;.
	\eeq
	This condition ensures that any policing effort concerted through the employment of an OD model is able to achieve a \textit{strictly larger  precision} (LHS) \textit{as compared to random sampling}, where policing via the latter would simply yield a precision that is equal to the prevalence of outliers in the population (RHS) in expectation. Note that our first condition in \eqref{eq:ppv} 
	is related to detection performance, and specifically, the usefulness of OD itself for policing applications. 
	
	 {\em How-to: }  We can indirectly control for detection effectiveness via careful feature engineering. Assuming domain experts assist in feature design,  it would be reasonable to expect a better-than-random detector that satisfies Eq.~\eqref{eq:ppv}.

	Next, we present \emph{fairness-related} conditions for OD.

\vspace{0.075in}
	\noindent {D2. \bfseries Treatment parity: } OD should exhibit non-disparate treatment that explicitly avoid the use of $\pv$ for producing a decision. In particular, OD decisions should obey \beq
		\prob(\flag = 1 \;|\; \xobs)
	\;=\; 
	\prob(\flag = 1\;|\; \xobs, \pv = v), \; \forall v \;. 
	\eeq
	In words, the probability that the detector outputs an outlier label $\flag$ for a given feature vector $\xobs$ remains unchanged even upon observing the value of the $\pv$.  In many settings (e.g. employment), explicit $\pv$ use is unlawful at inference.

	  {\em How-to: }  We can build an OD model using a disparate learning process~\cite{lipton2018does} that uses $\pv$ only during the model training phase, but does not require access to $\pv$ for producing a decision, hence satisfying treatment parity.

	Treatment parity ensures that OD decisions 
	are effectively ``blindfolded'' to the $\pv$.  However, this notion of fairness alone is not sufficient to ensure equitable policing across groups; namely, removing the $\pv$ from scope may still allow discriminatory OD results for the minority group (e.g., African American) due to the presence of several other features (e.g., zipcode) that (partially-)redundantly encode the $\pv$.  Consequently, by default, OD will use the $\pv$ \textit{indirectly}, through access to those correlated proxy features.  Therefore, additional conditions follow.

	\vspace{0.075in}
	\noindent {D3. \bfseries Statistical parity (SP): } 
	One would expect the OD outcomes to be independent of group membership, i.e. $O \indep PV$.
	In the context of OD, this notion of fairness (also, demographic parity, group fairness, or independence) aims to enforce that the outlier flag rates are independent of $\pv$ and equal across the groups as induced by $\pv$. 
	
	Formally, an OD model satisfies statistical parity under a distribution over $(\xobs, \pv)$ where $\pv \in \{a, b\}$ if 
	%
		\beq
		\begin{aligned}
			\label{eq:sp}
		fr_a = fr_b \;\; \text{or equivalently,} \;\;\; \\
		\prob(\flag = 1 |\pv = a) \;=\; 
		\prob(\flag = 1 | \pv = b) \;.
		\end{aligned}
		\eeq
	SP implies that the fraction of minority (majority) members in the flagged set is the same as the fraction of minority (majority) in the overall population. Equivalently, one can show 
\begin{align}
\label{eq:pop}
    fr_a = fr_b \; &(\text{SP})  \iff    \prob(PV=a|O=1)=\prob(PV=a) \; \nonumber \\ 
    & \text{and}\; \prob(PV=b|O=1)=\prob(PV=b) \;.
\end{align}

	The motivation for SP derives from luck egalitarianism~\cite{09luck} -- a family of egalitarian theories of distributive justice that aim to counteract the distributive effects of ``brute luck''. 
	By redistributing equality to those who suffer through no fault of their own choosing, mediated via race, gender, etc.,
	it aims to counterbalance the manifestations of such ``luck''.  Correspondingly, SP ensures equal flag rates across $\pv$ groups, eliminating such group-membership bias. Therefore, it merits incorporation in OD since OD results are used for policing or auditing by human experts in downstream applications.\looseness=-1

		 {\em How-to: } 
	 We could enforce SP during OD model learning by comparing the distributions of the predicted outlier labels $\flag$ amongst groups, and update the model to ensure that these output distributions match across groups. 

	SP, however, is not sufficient to ensure both equitable \textit{and} accurate outcomes as it
	permits so-called ``laziness''~\cite{barocas2017fairness}.
    Being an unsupervised quantity that is agnostic to the ground-truth labels $\mathcal{Y}$, SP could be satisfied while producing decisions that are arbitrarily inaccurate for any or all of the groups.
    In fact, an extreme scenario would be random sampling; where we select a certain fraction of the given population uniformly at random and flag all the sampled instances as outliers. As evident via Eq. \eqref{eq:pop}, this entirely random procedure would achieve SP (!).
    The outcomes could be worse -- that is, not only inaccurate (put differently, as accurate as random) but also unfair for only \textit{some} group(s) -- when OD flags mostly the true outliers from one group while flagging randomly selected instances from the other group(s), leading to discrimination \textit{despite} SP.
     Therefore, additional criteria is required to explicitly penalize ``laziness,'' aiming to not only flag \emph{equal fractions} of instances across groups but also those \textit{true outlier} instances 
     from both groups.
    

	\vspace{0.075in}
	\noindent {D4. \bfseries Group fidelity (also, Equality of Opportunity):} It is desirable that the \textit{true} outliers are equally likely to be assigned higher scores, and in turn flagged, regardless of their membership to any group as induced by $\pv$. We refer to this notion of fairness as group fidelity, which steers OD outcomes toward being faithful to the ground-truth outlier labels equally across groups, obeying the following condition
	\beq 
	\label{eq:fidelity}
	\prob(\flag = 1 |  \lbl = 1, \pv = a ) \;=\; \prob(\flag = 1 |  \lbl = 1, \pv = b )\;.
	\eeq
	Mathematically, this condition is equivalent to the so-called Equality of Opportunity\footnote{Opportunity, because positive-class assignment by a supervised model in many fair ML problems  is often associated with a positive outcome, such as being hired or approved a loan.} in the supervised fair ML literature, and is a special case of Separation~\cite{verma2018fairness, hardt2016equality}. In either case, it requires that all $\pv$-induced groups experience the same true positive rate. 
	Consequently, it penalizes ``laziness'' by ensuring that the true-outlier instances are ranked above (i.e., receive higher outlier scores than) the inliers within each group.\looseness=-1
	
	The key caveat here is that \eqref{eq:fidelity} is a supervised quantity that requires access to the ground-truth labels $\mathcal{Y}$, which are explicitly unavailable for the \textit{unsupervised} OD task. 
	What is more, various impossibility results have shown that certain fairness criteria, including SP and Separation, are mutually exclusive or incompatible~\cite{barocas2017fairness},
	implying that simultaneously satisfying both of these conditions (exactly) is not possible.
	
	  {\em How-to: }
	The unsupervised OD task does not have access to $\mathcal{Y}$, therefore, group fidelity cannot be enforced directly.
	Instead, 
	we propose to enforce group-level rank preservation 
	that maintains fidelity to within-group ranking from the \base model, where \base is a fairness-agnostic OD model. Our intuition is that rank preservation acts as a proxy for group fidelity, or more broadly Separation, via our assumption that within-group ranking in the \base model is accurate and top-ranked instances within each group encode the highest risk samples within each group. 
	
	Specifically, let $\pi^{\text{\base}}$ represent the ranking of instances based on \base OD scores, and let $\pi^{\text{\base}}_{\pv = a}$ and $\pi^{\text{\base}}_{\pv = b}$ denote the group-level ranked lists for majority and minority groups, respectively. Then, the rank preservation is satisfied when $\pi^{\text{\base}}_{\pv = v} = \pi^{}_{\pv = v}; \forall v \in \{a, b\}$ where $\pi^{}_{\pv = v}$ is the ranking of group-$v$ instances based on outlier scores from our proposed OD model. Group rank preservation aims to address the ``laziness'' issue that can manifest while ensuring SP; we aim to not lose the within-group detection prowess of the original detector while maintaining fairness. Moreover, since we are using only a proxy for Separation, the mutual exclusiveness of SP and Separation may no longer hold, though we have not established this mathematically.
	
	\vspace{0.075in}
	\noindent {D5. \bfseries Base rate preservation: } The flagged outliers from OD results are often audited and then  used as human-labeled data for supervised detection (as discussed in previous section) which can introduce bias through a feedback loop. Therefore, it is desirable that  group-level base rates within the flagged population 
	is reflective of the group-level base rates in the overall population, so as to not introduce group bias of outlier incidence downstream. In particular, we expect OD outcomes to ideally obey
	\begin{align}
	&\prob(\lbl =1|\flag=1, \pv =a) = br_a\;, \text{  and  } \\ 
	&\prob(\lbl =1|\flag=1, \pv =b) = br_b \;. 
	\end{align}
Note that group-level base rate within the flagged population (LHS) is mathematically equivalent to group-level precision in OD outcomes, and as such, is also a supervised quantity which suffers the same caveat as in D4, regarding unavailability of $\mathcal{Y}$.

 {\em How-to: } As noted, $\mathcal{Y}$ is not available to an unsupervised OD task. Importantly, provided an OD model satisfies D1 and D3, we show that it cannot simultaneously also satisfy D5, i.e. per-group equal base rate in OD results (flagged observations) and in the overall population.

\begin{claim}
	\label{claim1}
	Detection effectiveness: $\prob(\lbl = 1 | \flag = 1) > \prob(\lbl =1)$ and SP:
	$\prob(\flag = 1 | \pv = a) = \prob(\flag = 1 | \pv = b)$ jointly imply that $\prob(\lbl =1|\flag=1,\pv =v) > \prob(\lbl =1| \pv =v), \exists v.$ 
\end{claim}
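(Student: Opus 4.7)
The plan is to reduce the claim to the following arithmetic fact: if a convex combination of numbers $(q_v - p_v)$ with nonnegative weights is strictly positive, then at least one summand $q_v - p_v$ is strictly positive. The substance is therefore to show that $P(Y=1\mid O=1)$ and $P(Y=1)$ can be written as convex combinations of the group-level quantities $q_v := P(Y=1\mid O=1, PV=v)$ and $p_v := P(Y=1\mid PV=v)$ with the \emph{same} weights, namely the group proportions $\pi_v := P(PV=v)$.

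First I would establish the key consequence of statistical parity, which is essentially the equivalence already recorded in Eq.~\eqref{eq:pop}: under SP, $P(PV=v\mid O=1) = P(PV=v)$ for all $v \in \{a,b\}$. This is a one-line Bayes computation, since SP forces $P(O=1\mid PV=v)$ to be the same constant $f$ for every $v$, so $P(PV=v\mid O=1) = \frac{P(O=1\mid PV=v)\,P(PV=v)}{P(O=1)} = \frac{f\,\pi_v}{f} = \pi_v$.

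Next, I would apply the law of total probability conditioning on $PV$ to both sides of the detection-effectiveness hypothesis. This yields
\begin{equation*}
P(Y=1\mid O=1) \;=\; \sum_{v\in\{a,b\}} q_v\,P(PV=v\mid O=1) \;=\; \sum_{v\in\{a,b\}} q_v\,\pi_v,
\end{equation*}
where the second equality uses the SP consequence from the previous step, and
\begin{equation*}
P(Y=1) \;=\; \sum_{v\in\{a,b\}} p_v\,\pi_v.
\end{equation*}
The hypothesis $P(Y=1\mid O=1) > P(Y=1)$ then becomes $\sum_v (q_v - p_v)\pi_v > 0$.

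Finally, since each $\pi_v \geq 0$ and the weighted sum is strictly positive, at least one term must be strictly positive, so there exists $v \in \{a,b\}$ with $q_v > p_v$, which is exactly the conclusion. There is no real obstacle here; the only subtlety worth highlighting in the write-up is that the identical weighting of the two convex combinations is not automatic and relies crucially on SP --- without SP the minority-heavy weighting on the $O=1$ side could invalidate the coordinate-wise comparison, and the proof (and indeed the claim) would fail. I would make that dependence explicit so the reader sees precisely where SP enters.
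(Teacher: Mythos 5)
Your proof is correct and follows essentially the same route as the paper's: decompose both $P(Y=1\mid O=1)$ and $P(Y=1)$ via the law of total probability over $PV$, use SP to replace $P(PV=v\mid O=1)$ with $P(PV=v)$ so that both sides become convex combinations with identical weights, and conclude that at least one group-level term must be strictly larger. Your write-up is, if anything, slightly more explicit than the paper's (spelling out the Bayes step behind Eq.~\eqref{eq:pop} and the final positivity argument), but there is no substantive difference.
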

\begin{proof}
	We prove the claim in Appendix\footnote{\texttt{\url{https://tinyurl.com/fairOD}}} ~\ref{app:c1}.
\end{proof}


Claim \ref{claim1} shows an incompatibility and states that, provided D1 and D3 are satisfied, the base rate in the flagged  population cannot be equal to (but rather, is an overestimate of) that in the overall population for \textit{at least one of the groups}. As such, base rates in OD outcomes cannot be reflective of their true values.
Instead, one may hope for the preservation of the \textit{ratio} of the base rates (i.e. it is not impossible). As such,
a relaxed notion of D5 is to preserve proportional base rates across groups in the OD results, that is,
\begin{equation}
\label{eq:ratio}
\frac{\prob(\lbl =1|\flag=1,\pv =a)}{\prob(\lbl =1|\flag=1,\pv =b)} = \frac{\prob(\lbl =1| \pv =a)}{\prob(\lbl =1|\pv =b)} \;.
\end{equation}
Note that ratio preservation still cannot be explicitly enforced as \eqref{eq:ratio} is also label-dependent. 
Finally we show in Claim \ref{claim2} that, provided D1, D3 and Eq. \eqref{eq:ratio} are all satisfied, 
then it entails that 
the base rate in OD outcomes is an overestimation of the true group-level base rate \emph{for every group}.

\begin{claim}
	\label{claim2}
	Detection effectiveness: $\prob(\lbl = 1 | \flag = 1) > \prob(\lbl =1)$, SP: $\prob(\flag = 1 | \pv = a)=\prob(\flag = 1 | \pv = b)$, and Eq. \eqref{eq:ratio}: $\frac{\prob(\lbl =1|\flag=1,\pv =a)}{\prob(\lbl =1|\flag=1,\pv =b)} = \frac{\prob(\lbl =1| \pv =a)}{\prob(\lbl =1|\pv =b)}$ jointly imply $\prob(\lbl=1 | \pv=v, \flag=1) $$>$$ \prob(\lbl=1 | \pv=v), \forall v$.
\end{claim}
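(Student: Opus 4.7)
The plan is to deduce Claim~\ref{claim2} from Claim~\ref{claim1} by leveraging the ratio-preservation identity in Eq.~\eqref{eq:ratio} as a coupling between the two groups. The one-line intuition is: Claim~\ref{claim1} already gives the inequality $\prob(\lbl=1\mid\flag=1,\pv=v)>\prob(\lbl=1\mid\pv=v)$ for \emph{at least one} $v$, and Eq.~\eqref{eq:ratio} forces the per-group ``lift ratio'' $\prob(\lbl=1\mid\flag=1,\pv=v)/\prob(\lbl=1\mid\pv=v)$ to be equal across the two groups, so the inequality must in fact hold for \emph{every} $v$.

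First, I would rewrite Eq.~\eqref{eq:ratio} in its ``cross-ratio'' form by cross-multiplying: since the two ratios are assumed equal (and all four probabilities are positive, which I would briefly note is ensured by detection effectiveness together with the baseline assumption that outliers exist in each group), rearranging yields
\begin{equation*}
\frac{\prob(\lbl=1\mid\flag=1,\pv=a)}{\prob(\lbl=1\mid\pv=a)} \;=\; \frac{\prob(\lbl=1\mid\flag=1,\pv=b)}{\prob(\lbl=1\mid\pv=b)} \;=:\; r.
\end{equation*}
Thus the per-group lift is a single common scalar $r$, shared by both groups.

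Next, I would invoke Claim~\ref{claim1}: under D1 and D3, there exists some $v^\star\in\{a,b\}$ for which $\prob(\lbl=1\mid\flag=1,\pv=v^\star)>\prob(\lbl=1\mid\pv=v^\star)$. Equivalently, the lift at $v^\star$ satisfies $r>1$. But by the previous step the lift is the \emph{same} scalar $r$ for both groups, hence $r>1$ holds for every $v\in\{a,b\}$. Unwinding the definition of $r$ gives $\prob(\lbl=1\mid\flag=1,\pv=v)>\prob(\lbl=1\mid\pv=v)$ for all $v$, which is exactly the claim.

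I do not foresee a substantive obstacle: modulo the step of verifying strict positivity of the conditional probabilities so that Eq.~\eqref{eq:ratio} can be cross-multiplied without degenerate cases, the argument is a short algebraic consequence of Claim~\ref{claim1} plus Eq.~\eqref{eq:ratio}. The only subtle point worth stating carefully in the write-up is that Claim~\ref{claim1}'s existential ``$\exists v$'' is upgraded to Claim~\ref{claim2}'s universal ``$\forall v$'' precisely because Eq.~\eqref{eq:ratio} equalizes the per-group lift; without Eq.~\eqref{eq:ratio}, one group's lift could a priori be below $1$ while the other's exceeds $1$.
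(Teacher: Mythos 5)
Your proof is correct and follows essentially the same route as the paper's: both arguments combine Claim~\ref{claim1}'s existential guarantee with the observation that Eq.~\eqref{eq:ratio} forces the per-group lift $\prob(\lbl=1\mid\flag=1,\pv=v)/\prob(\lbl=1\mid\pv=v)$ to be a common constant across groups. Your direct cross-multiplication is a cleaner packaging of what the paper establishes through a three-case contradiction argument (its Case~3 is precisely your ``equal lift'' identity, with $L=K$), so no substantive difference remains beyond the positivity caveat you already flag.
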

\begin{proof}
	We prove the claim in Appendix ~\ref{app:c2}.
\end{proof}

Claim~\ref{claim1} and Claim~\ref{claim2} indicate that if we have both ($i$) better-than-random precision (D1) and ($ii$) SP (D3), interpreting the base rates in OD outcomes for  downstream learning tasks would not be meaningful, as they would not be reflective of true population base rates.
Due to both these incompatibility results, and also feasibility issues given the lack of $\mathcal{Y}$, we leave base rate preservation -- despite it being a desirable property -- out of consideration.

\subsection{Problem Definition}
\label{sub:problem}
Based on the definitions and enforceable desiderata, our fairness-aware OD problem is formally defined as follows:


\begin{problem}[\bluecolor{Fairness-Aware Outlier Detection}]
	\label{prob:fair}
Given samples $\mathcal{X}$ and protected variable values $\mathcal{PV}$, estimate outlier scores $\mathcal{S}$ and assign outlier labels $\mathcal{O}$, to achieve 
		\bit
		\item[(i)] $\prob(\lbl = 1 | \flag = 1) > \prob(\lbl =1)$\;, 
		
		\hspace*{\fill} [Detection effectiveness] 
		\item[(ii)] $\prob(\flag \;|\; \xobs, \pv = v) = \prob(\flag \;|\; \xobs), \; \forall v \in \{a, b\}$\;, 
		
		\hspace*{\fill} [Treatment parity] 
		\item[(iii)]  $\prob(\flag = 1 | \pv = a) = \prob(\flag = 1 | \pv = b)$\;,  
		
		\hspace*{\fill} [Statistical parity]
		\item[(iv)]  $\pi^{\text{\base}}_{\pv = v} = \pi^{}_{\pv = v} \;,  \forall v \in \{a, b\}$, where \base is a fairness-agnostic detector.  \hspace*{\fill} [Group fidelity proxy]\looseness=-1
		\eit
\end{problem}


Given a dataset 
along with $\pv$ values, the goal is to design an OD model that builds on an existing \base OD model and satisfies the criteria $(i)$--$(iv)$, following the proposed 
desiderata D1 -- D4.

\subsection{Caveats of a Simple Approach}
A simple yet na\"ive fairness-aware OD approach to address Problem~\ref{prob:fair} 
can be designed as follows:
\begin{enumerate}
	\item Obtain ranked lists $\pi^{\text{\base}}_{\pv = a}$ and $\pi^{\text{\base}}_{\pv = b}$ from \base, and
	\item Flag top instances as outliers from each ranked list at equal fraction such that \\ $\prob(\flag = 1 | \pv = a) = \prob(\flag = 1 | \pv = b), \pv \in \{a, b\}$
\end{enumerate}
This approach fully satisfies $(iii)$ and $(iv)$ in Problem~\ref{prob:fair} by design, as well as $(i)$ given suitable features.  However, 
it explicitly suffers from \textit{disparate treatment}.


\section{Fairness-aware Outlier Detection}
\label{sec:method}
In this section, we describe our proposed \method{} -- an unsupervised, fairness-aware, end-to-end OD model that embeds our proposed learnable (i.e. optimizable) fairness constraints \ignore{which can be plugged} into an existing \base OD model.
The key features of our model are that \method aims for equal flag rates across groups (statistical parity), and encourages correct top group ranking (group fidelity), while not requiring $\pv$ for decision-making on new samples (non-disparate treatment). As such, it aims to target the proposed desiderata D1 -- D4 as described in Sec. \ref{sec:prelim}. 


\subsection{Base Framework}
\label{subsec:method}

Our proposed OD model instantiates a deep-autoencoder (AE)  framework for the base outlier detection task. However, we remark that the fairness regularization criteria introduced by \method can be plugged into any end-to-end \textit{optimizable} anomaly detector, such as one-class support vector machines~\cite{scholkopf2001estimating}, deep anomaly detector~\cite{chalapathy2018anomaly}, variational AE for OD~\cite{an2015variational}, and deep one-class classifiers~\cite{pmlr-v80-ruff18a}. Our choice of AE as the \base OD model stems from the fact that AE-inspired methods have been shown to be state-of-the-art outlier detectors~\cite{chen2017outlier, ma2013parallel, zhou2017anomaly} 
and that our fairness-aware loss criteria can be optimized in conjunction with the objectives of such models. The main goal of \method is to incorporate our proposed notions of fairness into an end-to-end OD model, irrespective of the choice of the \base model family. 

AE consists of two main components: an encoder $\enc: \xobs \in \mathbb{R}^d \mapsto \aez \in \mathbb{R}^m$ and  a decoder $\dec: \aez \in \mathbb{R}^m \mapsto X \in \mathbb{R}^d$. $\enc(\xobs)$ encodes the input $\xobs$ to a hidden vector (also, code) $\aez$ that preserves the important aspects of the input. Then, $\dec(\aez)$ aims to generate $\xobs^\prime$, a reconstruction of the input from the hidden vector $\aez$.  Overall, the AE can be written as $\recon = \dec \circ \enc$, such that $\recon(X) = \dec\left(\enc\left(X\right)\right)$.
For a given AE based framework, the outlier score for $\xobs$ is computed using the reconstruction error as
\begin{align}
\label{eq:score}
	\s(\xobs) = \| \xobs - \recon(\xobs)\|_2^2 \;.
\end{align}

Outliers tend to exhibit large reconstruction errors because they do not conform to to the  patterns in the data as coded by an auto-encoder, hence the use of  reconstruction errors as outlier scores~\cite{aggarwal2015outlier, pang2020deep, shah2014spotting}.
This scoring function is general in that it applies to many reconstruction-based OD models, which have different parameterizations of the reconstruction function $\recon$. 
We show in the following how \method regularizes the reconstruction loss from \base through fairness constraints that are conjointly optimized during the training process. The \base OD model optimizes the following
\begin{equation}
\label{eq:reconstruction}
\mathcal{L}_{\text{\base}}= \sum_{i=1}^{\samples}  \| \xobs_i - \recon(\xobs_i)\|_2^2
\end{equation}
and we denote its outlier scoring function as $\s^{\text{\base}}(\cdot)$.


\subsection{Fairness-aware Loss Function}
\label{subsec:criteria}

We begin with designing a loss function for our OD model that optimizes for achieving SP and group fidelity by introducing regularization to the \base objective criterion.
Specifically, \method minimizes the following loss:
\begin{align}
	\label{eq:loss}
	\resizebox{0.875\hsize}{!}{%
	\boxed{
	\loss \;\; = \;\; \alpha \underbrace{\mathcal{L}_{\text{\base}}}_{\text{Reconstruction}} \;\;+ \;\; (1-\alpha) \underbrace{\lossC}_{\text{Statistical Parity}} \;\;+ \;\; \gamma \underbrace{\lossF}_{\text{Group Fidelity}}
	}
	}
\end{align}
where $\alpha \in (0, 1)$ and $\gamma > 0$ are  hyperparameters which govern the balance between different fairness criteria and reconstruction quality in the loss function.

The first term in Eq.~\eqref{eq:loss} is the objective for learning the reconstruction (based on \base model family)  as given in Eq.~\eqref{eq:reconstruction}, which
  quantifies the {goodness} of the encoding $\aez$ via the squared error between the original input and its reconstruction generated from $\aez$.
The second component in Eq.~\eqref{eq:loss} corresponds to regularization introduced to enforce the fairness notion of independence, or statistical parity (SP) as given in Eq.~\eqref{eq:pop}. Specifically, the term seeks to minimize the absolute correlation between the outlier scores $\mathcal{S}$ (used for producing predicted labels $\mathcal{O}$) and protected variable values $\mathcal{PV}$. $\lossC$ is given as 
\begin{align}
	\label{eq:corrloss}
	\lossC = \left\lvert \frac{\big(\sum_{i=1}^{\samples} \s(\xobs_i) - \mu_{\s} \big)\; \big(\sum_{i=1}^{\samples} \pv_i - \mu_{\pv} \big)}    {\sigma_{\s} \; \sigma_{\pv}} \right\rvert
\end{align}
where $\mu_{\s} = \frac{1}{\samples} \sum_{i=1}^{\samples} \s(\xobs_i)$, $\sigma_{\s} =  \frac{1}{\samples}\sum_{i=1}^{\samples} (\s(\xobs_i) - \mu_{\s})^2$, $\mu_{\pv} = \frac{1}{\samples} \sum_{i=1}^{\samples} \pv_i$, and $\sigma_{\pv} =  \frac{1}{\samples}\sum_{i=1}^{\samples} (\pv_i - \mu_{\pv})^2$.

We adapt this absolute correlation loss from \cite{beutel2019putting}, which proposed its use in a supervised setting with the goal of enforcing statistical parity.  As \cite{beutel2019putting} mentions, while minimizing this loss does not guarantee independence, it performs empirically quite well and offers stable training. We observe the same in practice; it leads to minimal associations between OD outcomes and the protected variable (see details in Sec. \ref{sec:exp}).

Finally, the third component of  Eq.~\eqref{eq:loss} emphasizes that \method should maintain fidelity to within-group rankings from the \base model (penalizing ``laziness'').
We set up a listwise learning-to-rank objective in order to enforce group fidelity. Our goal is to train \method such that it reflects the within-group rankings based on  $\s^{\text{\base}}(\cdot)$ from \base. To that end, we employ a listwise ranking loss criterion that is based on the well-known   Discounted Cumulative Gain (DCG) \cite{jarvelin2002cumulated} measure, often used to assess ranking quality in information retrieval tasks such as search. 
For a given ranked list, DCG is defined as
\begin{equation*}
\text{DCG} = \sum_r \frac{2^{rel_r}-1}{\log_2(1 +r)} \label{equ:DCG_stronger}
\end{equation*}
where 
$rel_r$ depicts the relevance of the item 
ranked at the $r^{th}$ position. 
In our setting, we use the outlier score $\s^{\text{\base}}(X)$ of an instance $X$ to reflect its relevance since we aim to mimic the group-level ranking by \base. As such, DCG per group can be re-written as

\noindent
\resizebox{\linewidth}{!}{
	\begin{minipage}{\linewidth}
 \begin{align*} 
 \text{DCG}_{\pv=v}
 &= \sum_{\xobs_i \in \mathcal{X}_{\pv=v}} \frac{2^{{\s^{\text{\base}}(X_i)}}-1}
 {\log_2\big( 1+ \sum_{\xobs_k \in \mathcal{X}_{\pv=v}} \mathbbm{1}{[{\s(X_i)} \leq {\s(X_k)}}] \big)}
 \end{align*}
 \end{minipage}
}
where 
$\mathcal{X}_{\pv=a}$ and $\mathcal{X}_{\pv=b}$ would respectively denote the set of observations from majority and minority groups, and 
$\s(X)$ is the estimated outlier score from our \method model under training. 

A key challenge with DCG is that it is not differentiable, as it involves ranking (sorting).
Specifically, the sum term in the denominator 
uses the (non-smooth) indicator function $\mathbbm{1}(\cdot)$ to obtain the position of instance $i$ as ranked by the estimated outlier scores. 
We circumvent this challenge by replacing the indicator function by the (smooth) sigmoid approximation, following  \cite{qin2010general}.
Then, the group fidelity loss component $\lossF$ is given as
\beq
	\label{eq:rankloss}	
	\begin{aligned}
		\lossF  = \sum_{v \in \{a,b\}} \left(1 -  \sum_{\xobs_i \in \mathcal{X}_{\pv=v}}\frac{ 2^{\s^{\text{\base}}(\xobs_i)} -1 }
		{\text{\textsc{dnm}}} \right)
	\end{aligned}
\eeq
\resizebox{.96\linewidth}{!}{
	\begin{minipage}{\linewidth}
\begin{align*}
\text{\textsc{dnm}} = \log_2\big( 1 + \sum_{\xobs_k \in \mathcal{X}_{\pv=v}} \text{sigm}( \s(\xobs_k) - \s(\xobs_i)  ) \big) \cdot IDCG_{\pv=v}, 
\end{align*}
\end{minipage}
}

 $\text{sigm}(x) =$ $\frac{\exp(-cx)}{1+ \exp(-cx)}$ is the sigmoid function where $c>0$ is the scaling constant, and,
$ IDCG_{\pv=v} = \sum_{j=1}^{|\mathcal{X}_{\pv=v}|}({ (2^{\s^{\text{\base}}(\xobs_j)} -1)}$ $/{\log_2(1+j)})$
 is the ideal (hence $I$), i.e. largest  DCG value attainable for the respective group. Note that IDCG can be computed per group apriori to model training via \base outlier scores alone, and serves as a normalizing constant in Eq. \eqref{eq:rankloss}.

 Note that having trained our model, scoring instances does not require access to the value of their $\pv$, as $\pv$ is only used in Eq.  \eqref{eq:corrloss} and \eqref{eq:rankloss} for training purposes. At test time, the anomaly score of a given instance $X$ is computed simply via Eq. \eqref{eq:score}. Thus, \method also fulfills the desiderata on treatment parity.

\begin{table*}[!ht]
	\caption{Summary statistics of real-world and synthetic datasets used for evaluation.\label{tab:datatable}}
	\vspace{-.1in}
	\centering{\resizebox{\textwidth}{!}{
		\begin{tabular}{lrrrrccr}
			\toprule
			{\bf Dataset}  & {$\mathbf{\samples}$} & $\mathbf{d}$ & $\mathbf{\pv}$  & $\mathbf{\pv=b}$ & $\mathbf{|\mathcal{X}_{PV=a}|/|\mathcal{X}_{PV=b}|}$ & \textbf{$\%$ outliers} & \textbf{Labels}\\
			\midrule
			\adult      & 25262  & 11  &  {gender} &  {\em female} & 4 & 5 & \{{income} $\leq 50K$,  {income} $> 50K$\}\\
			\credit  & 24593   & 1549  &  {age} &  \textit{age} $\leq 25$& 4 & 5 & \{{paid,  delinquent}\}\\
			\tweets      & 3982  & 10000  & {racial dialect} &  \textit{African-American} & 4 & 5 & \{{normal,  abusive}\}\\
			\ads       & 1682   & 1558   & {simulated} & $1$ & 4 & 5 & \{{non-ad,  ad}\}\\
			\midrule
			\synthone & 2400& 2 & {simulated}& $1$ & 4 & 5 & $\{0, 1\}$\\
			\synthtwo & 2400& 2 & {simulated}& $1$ & 4 & 5 &$\{0, 1\}$\\
			\bottomrule
		\end{tabular}
		}
	}
\end{table*}

\subsubsection*{Optimization and Hyperparameter Tuning} We optimize the parameters of \method by minimizing the loss function given in Eq.~\eqref{eq:loss} by using the built-in Adam optimizer~\cite{kingma2014adam} implemented in PyTorch.

\method comes with two tunable hyperparameters, $\alpha$ and $\gamma$. We define a grid for these and pick the configuration that achieves the best balance between SP and our proxy  quantity for group fidelity (based on group-level ranking preservation). Note that both of these quantities are unsupervised (i.e., do not require access to ground-truth labels), therefore, \method model selection can be done in a completely unsupervised fashion.
We provide further details about hyperparameter selection in Sec. \ref{sec:exp}.

\subsubsection*{Generalizing to Multi-valued and Multiple Protected Attributes}\hfill
\label{subsec:generalization}
\noindent\emph{Multi-valued $\pv$}.$\;$ \method generalizes beyond binary $\pv$, and easily applies to settings with multi-valued, specifically categorical $\pv$ such as race. Recall that $\lossC$ and $\lossF$  are the loss components that depend on $\pv$.
For a categorical $\pv$, $\lossF$ in Eq. ~\eqref{eq:rankloss} would simply remain the same, where the outer sum goes over all unique values of the $\pv$.
For $\lossC$, one could one-hot-encode (OHE) the $\pv$ into multiple variables and minimize the correlation of outlier scores with each variable additively. That is, an outer sum would be added to Eq.~\eqref{eq:corrloss} that goes over the new OHE variables encoding the categorical $\pv$.  

\noindent\emph{Multiple $PVs$}.$\;$ \method can handle multiple different $PVs$ simultaneously, such as race and gender, since the loss components Eq.~\eqref{eq:corrloss} and Eq.~\eqref{eq:rankloss} can be used additively for each $\pv$. However, the caveat to additive loss 
is that it would only enforce fairness with respect to each individual $\pv$, and yet may not exhibit fairness for the \textit{joint} distribution of protected variables \cite{kearns2018preventing}. Even when additive extension may not be ideal, we avoid modeling multiple protected variables as a single $\pv$ that induces groups based on values from the cross-product of available values across all $PVs$. This is because partitioning of the data based on cross-product may  yield many small groups, which could cause instability in learning and poor generalization.\looseness=-1



\section{Experiments}
\label{sec:exp}

Our proposed \method is evaluated through extensive experiments on a set of synthetic datasets as well as diverse real-world datasets. In this section, we present dataset description and the experimental setup, followed by key evaluation questions and results.

\subsection{Dataset Description}
Table \ref{tab:datatable} gives an overview of the datasets used in evaluation. 
A brief summary follows, with details on generative process of synthetic data and detailed descriptions in Appendix~\ref{app:syn}.
\label{sub:dataset}

\subsubsection{Synthetic}
We illustrate the efficacy of \method on two synthetic datasets, \synthone and \synthtwo.  These datasets present scenarios that mimic real-world settings, where we may have features that are uncorrelated with the outcome labels but partially correlated with the $\pv$~(see Fig.~\ref{fig:synth}a), or features which are correlated both to outcome labels and $\pv$~(see Fig.~\ref{fig:synth}b). 


\hide{
\begin{figure}[]
	\centering
	\subfloat [\synthone]{\includegraphics[scale=0.36]{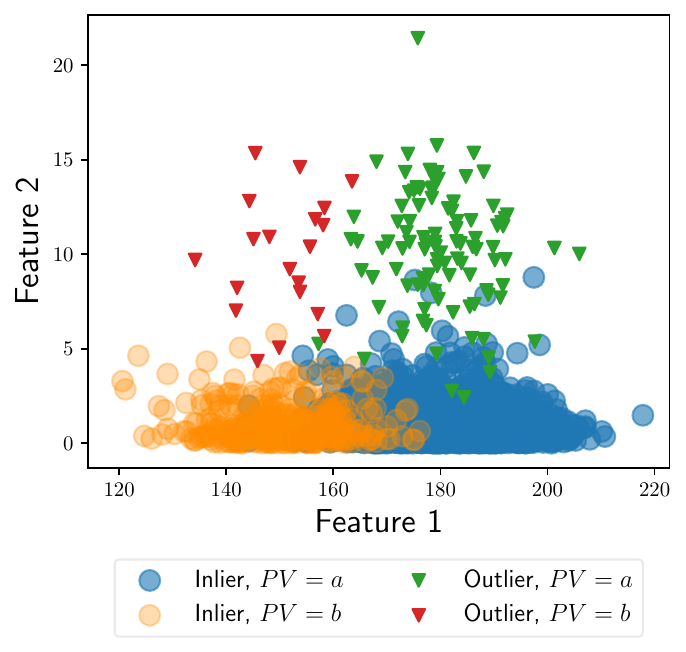}}
	\subfloat[\synthtwo]{\includegraphics[scale=0.36]{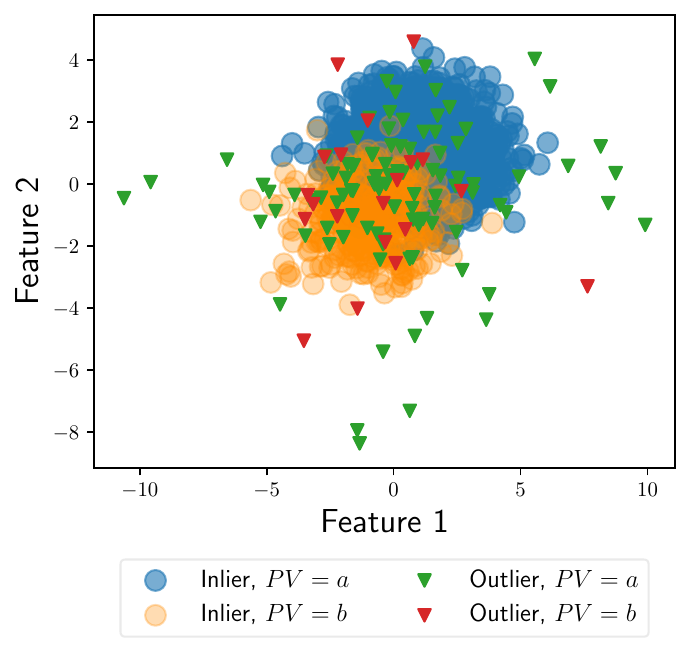}}
\end{figure}
}

\begin{figure}[]
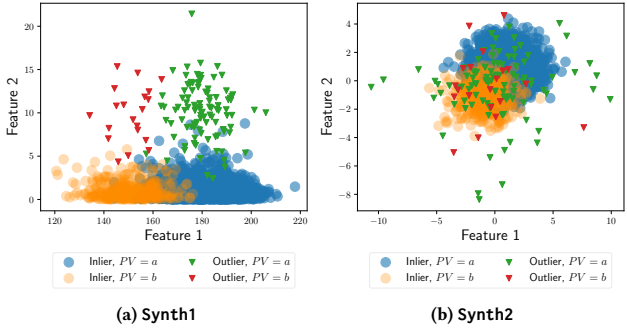

	\centering
	\subfloat [\synthone]{\includegraphics[scale=0.36]{{FIG/synthetic/t3_data}.pdf}}
	\subfloat[\synthtwo]{\includegraphics[scale=0.36]{FIG/synthetic/d4_data.pdf}}
	\caption{Synthetic datasets. See Appendix~\ref{app:syn} for the details of the data generating process.\label{fig:synth}}
\end{figure}

\noindent

\subsubsection{Real-world} We experiment on \numdatasets{} real-world datasets from diverse domains that have various types of PV: specifically gender, age, and race (see Table \ref{tab:datatable}). 

\subsection{Baselines} 

We compare \method to two classes of baselines: $(i)$ a fairness-agnostic base detector that aims to solely optimize for detection performance, and $(ii)$ preprocessing methods that aim to correct for bias in the underlying distribution and generate a dataset obfuscating the $\pv$. 

\noindent
\textbf{Base detector model:}
\begin{compactitem}
	\item \base: A deep anomaly detector that employs an autoencoder neural network. The reconstruction error of the autoencoder is used as the anomaly score. \base omits the protected variable from model training.
	
	
	
\end{compactitem}

\noindent
\textbf{Preprocessing based methods:}
\begin{compactitem}
	\item \rw~\cite{kamiran2012data}: A \prem{} approach that assigns weights to observations in each
	group differently to counterbalance the under-representation of minority samples.
	
	\item \disparate~\cite{feldman2015certifying}  A \prem{}  approach that edits feature values such that protected variables can not be predicted based on other features in order to increase group fairness. It uses $repair\_level$ as a hyperparameter, where $0$ indicates no repair, and the larger the value gets, the more obfuscation is enforced. 
	
	\item \lfr: This baseline is based on ~\cite{zemel2013learning} that aims to find a  latent representation of the data while  obfuscating information about protected variables. In our implementation, we omit the classification loss component during representation learning. It uses two hyperparameters -- $A_z$ to control for SP, and $A_x$ to control for the quality of representation. 
	
	\item \crl: This is based on ~\cite{beutel2017data} that finds  new latent representations by employing an adversarial training process to remove information about the protected variables. In our implementation, we use reconstruction error in place of the classification loss. \crl uses $\lambda$ to control for the trade-off between accuracy (in our implementation, reconstruction quality) and obfuscating protected variable.
	This baseline optimizes an objective similar to that proposed in ~\cite{zhang2020towards} which substitutes SVDD loss for reconstruction loss.
\end{compactitem}

The OD task proceeds the preprocessing, where we employ the \base{} detector on the modified data 
transformed or learned by each of the \prem{} based baselines. We do not compare to the LOF-based fair detector in \cite{p2020fairOD} as it exhibits disparate treatment and is inapplicable in settings that we consider.


{\bfseries Hyperparameters} The hyperparameter settings for the competing methods are detailed in Appendix~\ref{app:hyperparam}.


\subsection{Evaluation}
\label{subsec:eval}
We design experiments to answer the following questions:

\begin{compactitem}
    \item {\bf [Q1] Fairness:} How well does \method{} (a) achieve fairness 
    as compared to the baselines, and
    (b) retain the within-group ranking from \base{}? 
    
    \item {\bf [Q2] Fairness-accuracy trade-off:} How accurately are the outliers detected by \method{} as compared to fairness-agnostic \base{} detector?
    
    \item {\bf [Q3] Ablation study:} How do different elements of \method influence group fidelity and detector fairness?
\end{compactitem}





\subsubsection{Evaluation Measures}
\subsubsection*{{\bf\fairness}} Fairness is measured in terms of statistical parity. We use flag-rate ratio $r = \frac{\prob(\flag=1|\pv=a)}{\prob(\flag=1|\pv=b)}$ which measures the statistical fairness of a detector based on the predicted outcome where $\prob(\flag=1|\pv=a)$ is the flag-rate of the \emph{majority} group and $\prob(\flag=1|\pv=b)$ is the flag-rate of the \emph{minority} group. We define \fairness$ = \min(r, {1}/{r}) \in [0,1]$. For a maximally fair detector, \fairness$ = 1$ as $r=1$. 



\subsubsection*{{\bf\gf}} We use the Harmonic Mean~(HM) of per-group \ndcg{} to measure how well the group ranking of \base{} detector is preserved in the fairness-aware detectors. HM between two scalars $p$ and $q$ is defined as $1/(\frac{1}{p} + \frac{1}{q})$. We use HM to report \gf since it is (more) sensitive to lower values (than e.g. arithmetic mean); as such, it takes large values when \textit{both} of its arguments have large values. We define $\text{\gf} = \text{HM} (\ndcg_{PV=a},$ $\ndcg_{PV=b})$, where


\noindent
\resizebox{\linewidth}{!}{
	\begin{minipage}[b]{\linewidth}
		\begin{align*}
		\ndcg_{PV=a} = 	\sum_{i=1}^{|\mathcal{X}_{PV=a}|} \frac{ 2^{s^{\text{\base}}(\xobs_i)} -1 }{\log_2( 1 + \sum_{k=1}^{|\mathcal{X}_{PV=a}|} \mathbbm{1}( \s(\xobs_i) \leq \s(\xobs_k) )) \cdot IDCG } \;,
		\end{align*}
	\end{minipage}
}
 $|\mathcal{X}_{PV=a}|$ is the number of instances in group with $PV=a$, $\mathbbm{1}(cond)$ is the indicator function that evaluates to $1$ if $cond$ is true and $0$ otherwise, $\s(\xobs_i)$ is the predicted score of the fairness-aware detector, 
$s^{\text{\base}}(\xobs_i)$ is the outlier score from \base{} detector and $IDCG = \sum_{j=1}^{|\mathcal{X}_{PV=a}|}\frac{ 2^{s^{\text{\base}}(\xobs_j)} -1 }{\log_2(j+1)}$.
\gf $\approx 1$ indicates that group ranking from the \base detector is well preserved. 

\subsubsection*{{\bf\topkrank}} We also measure how well the final ranking of the method aligns with the purely performance-driven 
\base{} detector, as \base{} optimizes only for reconstruction error. We compute top-$k$ rank agreement as the Jaccard set similarity between the top-$k$ observations as ranked by two methods. Let $\pi^{\text{\base}}_{[1:k]}$ denote the top-$k$ of the ranked list based on outlier scores $s^{\text{\base}}(X_i)$'s, and $\pi^{detector}_{[1:k]}$ be the top-$k$ of the ranked list for competing methods where $detector$$\in$\{\rw, \disparate, \lfr, \crl, \method\}. Then the measure is given as {\small $ \text{\topkrank} = |\pi^{\text{\base}}_{[1:k]} \cap \pi^{detector}_{[1:k]}|/|\pi^{\text{\base}}_{[1:k]} \cup \pi^{detector}_{[1:k]}| $}.

\subsubsection*{{\bf \prratio and \apratio}}  Finally, we consider supervised parity measures based on 
ground-truth labels, defined as the ratio of ROC AUC and Average Precision (AP) performances across groups;
$\text{\prratio} = {\text{AUC}_{PV=a}}/{\text{AUC}_{PV=b}}$ and $\text{\apratio} = {\text{AP}_{PV=a}}/{\text{AP}_{PV=b}}$.



\subsection*{\bfseries [Q1] Fairness}


In Fig.~\ref{fig:combinedrank} (presented in Introduction), \method{} is compared against \base, 
as well as all the \prem baselines across datasets. 
The methods are evaluated using the best configuration of each method\footnote{In Appendix~\ref{app:exp}, for all methods and all datasets, we report detailed values for different metrics for each \pv{} induced group.} on each dataset. The best hyperparameters for \method \ignore{, \corr and \corrscore }are the ones for which \gf and \fairness\footnote{Note that we can do model selection in this manner without access to any labels, since both are unsupervised measures. 
} are closest to the ``ideal'' point as indicated in Fig.~\ref{fig:combinedrank}.

In Fig.~\ref{fig:combinedrank}~(left), the average of  
\fairness and \gf for each method over datasets is reported. \method achieves $9\times$ and $5\times$ improvement in \fairness
as compared to \base method and the nearest competitor, respectively. For \method, \fairness is very close to $1$,
while at the same time the group ranking from the \base detector is well preserved where \gf also approaches $1$. 
\method dominates the baselines (see Fig.~\ref{fig:combinedrank}~(right)) as it is on the Pareto frontier of \gf and \fairness. Here, each point on the plot represents an evaluated dataset. Notice that \method preserves the group ranking while achieving SP consistently across datasets.\looseness=-1 
\begin{figure}[htp]
	\centering
	\includegraphics[width=\linewidth]{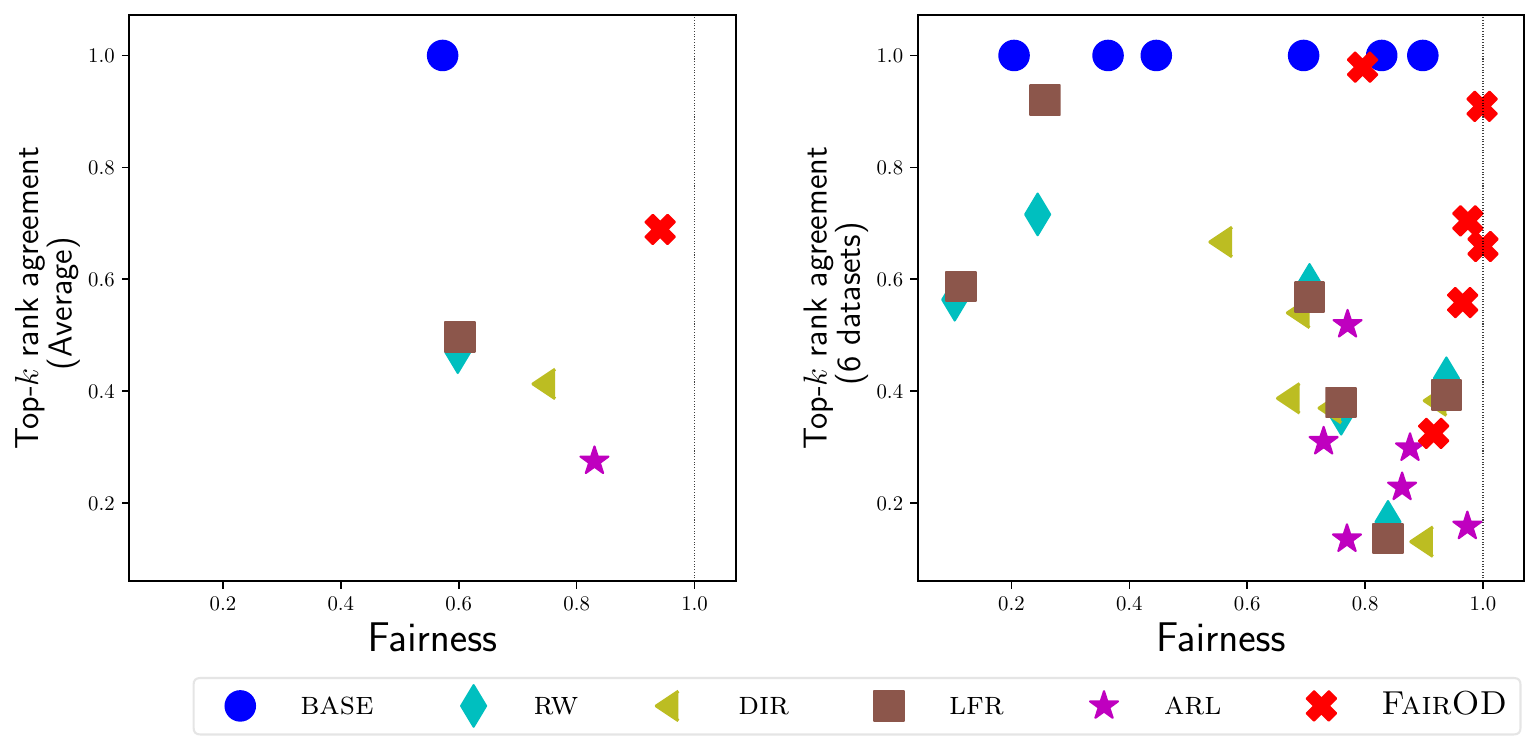}
	\caption{(left) \method achieves the best \topkrank compared to the competitors (\base is shown for reference) in addition to the best overall \fairness, across datasets on average, and
	(right) measures are shown on individual datasets.
	\label{fig:combinedfidelity} 
	}
\end{figure}
Fig.~\ref{fig:combinedfidelity} reports \topkrank (computed at top-5\% of ranked lists) of each method evaluated across datasets. The agreement measures the degree of alignment of the ranked results by a method with the fairness-agnostic \base detector.
In Fig.~\ref{fig:combinedfidelity}~(left), as averaged over datasets, \method achieves better rank agreement as compared to the competitors. In Fig.~\ref{fig:combinedfidelity}~(right), \method approaches ideal statistical parity across datasets while achieving better rank agreement with the \base detector.
Note that \method does not strive for a perfect \topkrank (=1) with \base, since \base is shown to fall short with respect to our desired fairness criteria. Our purpose in illustrating it is to show that the ranked list by \method is not drastically different from \base, which simply aims for detection performance.


Next we evaluate the competing methods against supervised (label-aware) fairness metrics. Note that \method does not (by design) optimize for these supervised fairness measures. 
Fig.~\ref{fig:premethodsAP} evaluates the methods against \fairness and label-aware parity 
criterion  -- specifically, group \apratio (ideal \apratio is $1$). \method 
approaches ideal \fairness as well as ideal \apratio across all datasets. \method outperforms the competitors on the averaged metrics over datasets~(Fig.~\ref{fig:premethodsAP}~(left)) and across individual datasets~(Fig.~\ref{fig:premethodsAP}~(right)). In contrast, the \prem baselines are up to $\sim$$5\times$ worse than \method over \apratio measure across datasets. 
Fig.~\ref{fig:premethodsPR} reports evaluation of methods against \fairness and another label-aware parity measure 
-- specifically, group \prratio (ideal \prratio = $1$). 
As shown in Fig.~\ref{fig:premethodsPR}~(left), \method outperforms all the baselines in expectation as averaged over all datasets. 
Further, in Fig.~\ref{fig:premethodsPR}~(right), \method\ignore{and its variants} consistently approaches ideal \prratio across datasets, while the preprocessing baselines are up to $\sim$$1.9\times$ worse comparatively.\looseness=-1

We note that impressively, \method approaches parity across different supervised fairness measures despite not being able to optimize for label-aware criteria explicitly.



\subsection*{\bfseries [Q2] Fairness-accuracy trade-off}
In the presence of ground-truth outlier labels, the performance of a detector could be measured using a ranking accuracy metric such as area under the ROC curve (ROC AUC).
\begin{figure}[htp]
	\centering
	\subfloat[\fairness vs. \apratio\label{fig:premethodsAP}]{\includegraphics[width=\linewidth]{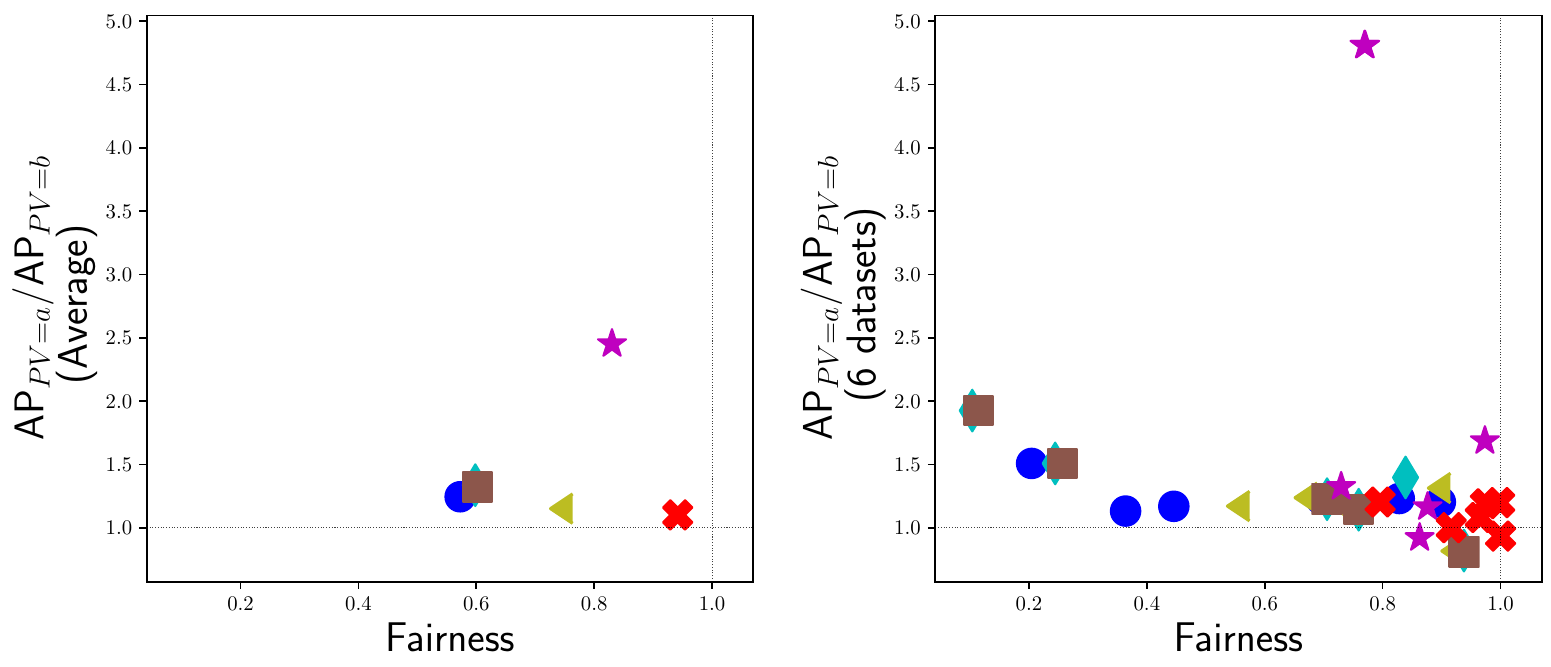}}\\
	\subfloat[\fairness vs. \prratio\label{fig:premethodsPR}]{\includegraphics[width=\linewidth]{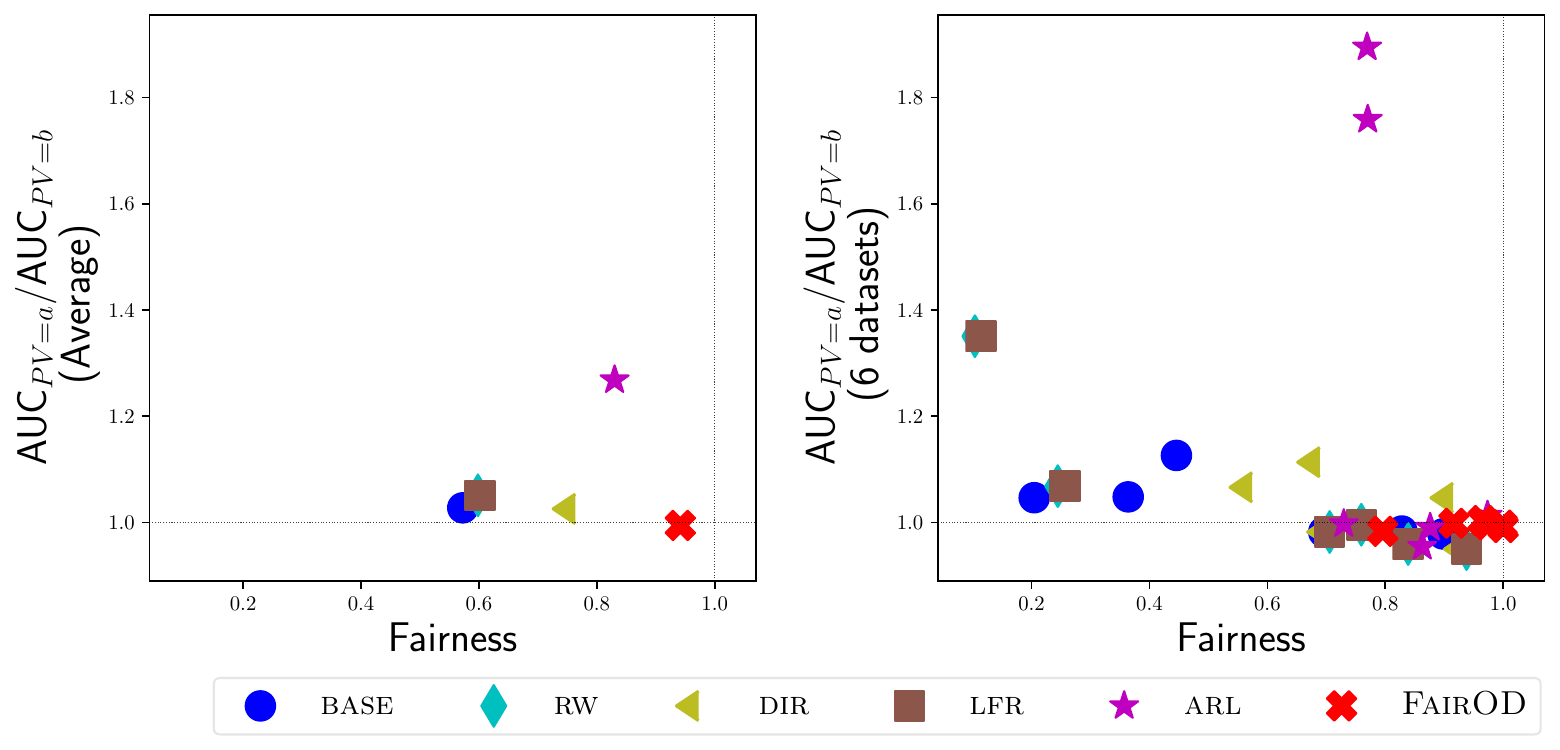}}
	\caption{\method outperforms all competitors on averaged label-aware parity metrics over datasets (left) and for individual datasets (right): we report \fairness against (a) Group \apratio and (b) Group \prratio. \label{fig:premethods}}
\end{figure}

\begin{figure}[htp]
	\centering
	{\includegraphics[width=1.8in]{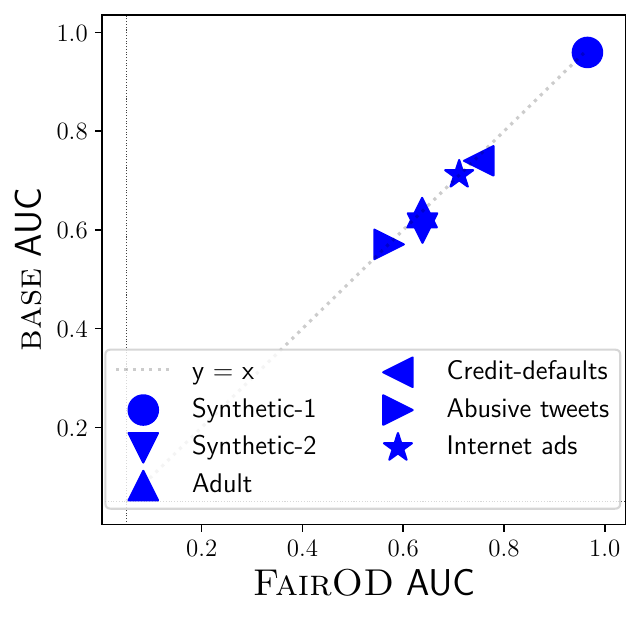}}
	\caption{ROCAUC of \method vs. \base: \method matches the performance of \base detector, while enforcing fairness criteria (maintaining good performance \emph{with} fairness). \label{fig:accuracy}}
\end{figure}

 In Fig.~\ref{fig:accuracy}, we compare the AUC performance of \method to that of \base detector for all datasets. Notice that each of the symbols (i.e. datasets) is slightly below the diagonal line indicating that \method achieves equal or sometimes even better (!) detection performance as compared to \base. The explanation is that since \method enforces SP and does not allow ``laziness",
it addresses the issue of falsely or unjustly flagged minority samples by \base, 
thereby, improving detection performance.

From Fig.~\ref{fig:accuracy}, we conclude that \method 
does not trade-off detection performance much, and in some cases it even improves performance by eliminating false positives from the minority group, as compared to the performance-driven, fairness-agnostic \base.

\subsection*{\bfseries [Q3] Ablation study}
Finally, we evaluate the effect of various components in the design of  \method's objective. Specifically, we compare to the results of two  relaxed variants of \method{}, namely \corr and \corrscore, described as follows.

\begin{itemize}[leftmargin=*]
\item \corr: We retain only the SP-based regularization term from \method objective along with the reconstruction error. This relaxation of \method is partially based on the method proposed in \cite{beutel2019putting}, which minimizes the correlation between model prediction and group membership to the $\pv$. In \corr, the reconstruction error term substitutes the classification loss  used in the optimization criteria in ~\cite{beutel2019putting}. Note that \corr concerns itself with only group fairness to attain SP which may suffer from ``laziness'' (hence, \method-L)~(see Sec.~\ref{sec:prelim}).

\item \corrscore: Instead of training with NDCG-based group fidelity regularization, \corrscore utilizes a simpler regularization, aiming to minimize the correlation (hence, \method-C) of the outlier scores per-group with the corresponding scores from \base{} detector. Thus, \corrscore attempts to maintain group fidelity over the entire ranking within a group, in contrast to \method's NDCG-based regularization which emphasizes the quality of the ranking at the top. Specifically, \corrscore substitutes $\lossF$ in Eq.~\eqref{eq:loss} with the following.
\begin{equation*}
 \resizebox{.95\hsize}{!}{$\lossF = - \sum\limits_{v \in \{a,b\}}  \left\lvert 
	\frac{
		\big(\sum_{\xobs_i \in \mathcal{X}_{\pv=v}} \s(\xobs_i) - \mu_{\s} \big) \; \big(\sum_{\xobs_i \in \mathcal{X}_{\pv=v}} 
		\s^{\text{\base}}(\xobs_i)
		- \mu_{\s^{\text{\base}}} \big)
	}{\sigma_{\s} \; \sigma_{\s^{\text{\base}}}} \right\rvert$}
\end{equation*}


where $v \in \{a, b\}$, and 
$\mu_{\s^{\text{\base}}}$, $\sigma_{\s^{\text{\base}}}$ are defined similar to $\mu_{\s}$, $\sigma_{\s}$ respectively.
\end{itemize}

\begin{figure}[]
	\centering
	\includegraphics[width=\linewidth]{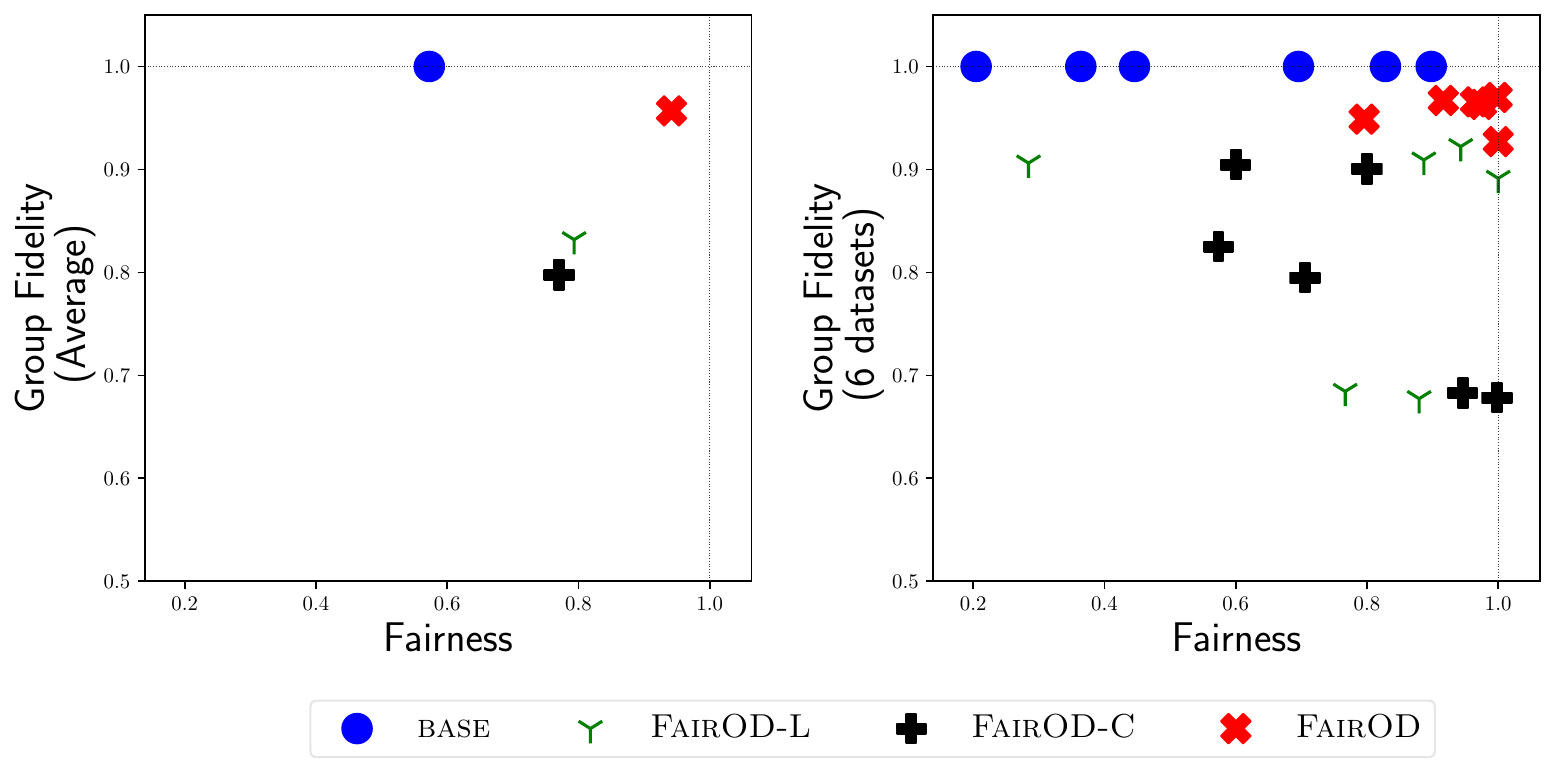}
	\caption{\method compared to its variants \corr and \corrscore across datasets, to evaluate the effect of different regularization components. \corr achieves comparable \fairness to \method while compromising \gf. 
		\corrscore improves \fairness as compared to \base, but is ill-suited to optimizing for \gf.  
		\label{fig:ablation} 
	}
\end{figure}



Fig.~\ref{fig:ablation} presents the comparison of \method and its variants. 
In Fig.~\ref{fig:ablation}~(left), we report the evaluation against \gf and \fairness averaged over datasets, and in Fig.~\ref{fig:ablation}~(right), the metrics are reported for each individual dataset. \corr approaches SP and achieves comparable \fairness to \method except on one dataset as shown in Fig.~\ref{fig:ablation}~(right). This results in lower \fairness compared to \method when averaged over datasets as shown in Fig.~\ref{fig:ablation}~(left). However, \corr suffers with respect to \gf as compared to \method.
This is because \corr may randomly flag instances to achieve SP since it does not include any group ranking criterion in its objective. On the other hand, \corrscore improves \fairness when compared to \base, while under-performing on the majority of datasets compared to \method across metrics. Since \corrscore tries to preserve group-level ranking, it trades-off on \fairness as measured against
\corr. 
We also observe that \method outperforms \corrscore on all datasets, which
suggests that preserving the entire group-level rankings may be a harder task than preserving top of the rankings; it is also a needlessly ill-suited one since what matters for outlier detection is the top of the ranking.

\section{Related Work}
\label{sec:related}
A majority of work on algorithmic fairness focuses on supervised learning problems.
We refer to \cite{barocas-hardt-narayanan, mehrabi2019survey} for an excellent overview. We organize related work in three sub-areas related to 
fairness in outlier detection, fairness-aware representation learning, and data de-biasing strategies.

\noindent {\bfseries Outlier Detection and Fairness}
Outlier detection (OD) is a well-studied problem in the literature~\cite{aggarwal2015outlier, gupta2013outlier, chandola2009anomaly}, and finds numerous applications in high-stakes domains like health-care~\cite{luo2010unsupervised}, security~\cite{gogoi2011survey}, and finance~\cite{phua2010comprehensive}. However, only a few studies focus on OD’s fairness aspects. P and Sam Abraham~\cite{p2020fairOD} propose a detector called FairLOF that applies an ad-hoc procedure to introduce fairness specifically to the LOF algorithm \cite{breunig2000lof}. This approach suffers from several drawbacks: (i) it mandates disparate treatment, which may be at times infeasible/unlawful, e.g. in domains like housing or employment, (ii) only prioritizes SP, which as we discussed in Sec. \ref{sec:prelim}, can permit ``laziness,'' (iii) it is heuristic, and cannot be concretely optimized end-to-end. Concurrent to our work, Zhang and Davidson~\cite{zhang2020towards} introduce a deep SVDD based detector employing adversarial training to obfuscate protected group membership, similar to our \crl{} baseline. This approach also has issues: (i) it only considers SP, and (ii) it suffers from well-known instability due to adversarial training~\cite{kodali2017convergence, madras2018learning, cevora2020fair}. A related work by Davidson and 
Ravi \cite{davidson2020framework} focuses on quantifying the fairness of an OD model's outcomes after detection, which thus has a different scope.

\noindent {\bfseries Fairness-aware Representation Learning}
Several works aim to map input samples to an embedding space, where the representations are indistinguishable across groups~\cite{zemel2013learning, louizos2015variational}. Most recently, adversarial training has been used to obfuscate PV association in representations while preserving accurate classification ~\cite{edwards2015censoring, beutel2017data, madras2018learning, adel2019one, zhang2018mitigating}. Most of these methods are supervised.  Substituting classification or label-aware loss terms with unsupervised reconstruction loss can plausibly extend such methods to OD (by using masked representations as inputs to a detector). However, a common shortcoming is that statistical parity (SP) is employed as the primary fairness criterion in these methods, e.g. in fair principal component analysis ~\cite{olfat2019convex} and fair variational autoencoder ~\cite{louizos2015variational}.
To summarize, fair representation learning techniques exhibit two key drawbacks for unsupervised OD: (i) they only employ SP, which may be prone to ``laziness", and (ii) isolating embedding from detection makes embedding oblivious to the task itself, and therefore can yield poor detection performance (as shown in experiments in Sec. \ref{sec:exp}). 

\noindent {\bfseries Strategies for Data De-Biasing}
Some of the popular de-biasing methods~\cite{kamiran2012data, krasanakis2018adaptive} 
draw from topics in learning with imbalanced data~\cite{he2009learning} that employ under- or over-sampling or point-wise weighting of the instances based on the class label proportions to obtain balanced data. These methods apply preprocessing to the data in a manner that is agnostic to the subsequent or downstream task and consider only the fairness notion of SP, which is prone to ``laziness.'' 

\section{Conclusions}
\label{sec:conclusion}
Although fairness in machine learning has become increasingly prominent in recent years, fairness in the context of unsupervised outlier detection (OD) has received comparatively little study.  OD is an integral data-driven task in a variety of domains including finance, healthcare and security, where it is used to inform and prioritize auditing measures.  Without careful attention, OD as-is can cause unjust flagging of \emph{societal minorities} (w.r.t. race, sex, etc.) because of their standing as \emph{statistical minorities}, when minority status does not indicate positive-class membership (crime, fraud, etc.).  This unjust flagging can propagate to downstream supervised classifiers and further exacerbate the issues.  Our work tackles the problem of fairness-aware outlier detection. Specifically, we first introduce guiding desiderata for, and concrete formalization of the fair OD problem.  We next present \method, a fairness-aware, principled end-to-end detector which addresses the problem, and satisfies several appealing properties: (i) \emph{detection effectiveness:} it is effective, and maintains high detection accuracy, (ii) \emph{treatment parity:} it does not suffer disparate treatment at decision time, (iii) \emph{statistical parity:} it maintains group fairness across minority and majority groups, and (iv) \emph{group fidelity:} it emphasizing flagging of truly high-risk samples within each group, aiming to curb detector ``laziness''. 
Finally, we show empirical results across diverse real and synthetic datasets, demonstrating that our approach achieves fairness goals while providing accurate detection, significantly outperforming unsupervised fair representation learning and data de-biasing based baselines.  
We hope that our expository work yields further studies in this area.


\begin{acks}
		This research is sponsored by NSF CAREER 1452425. In addition, we thank Dimitris Berberidis for helping with the early development of the ideas and the preliminary code base. Conclusions expressed in this material are those of the authors and do not necessarily reflect the views, expressed or implied, of the funding parties.
\end{acks}

\bibliographystyle{ACM-Reference-Format}
\bibliography{paper}

\pagebreak
\pagebreak
\newpage
\appendix
\label{sec:appendix}
\section{Proofs}
\subsection{Proof of Claim~\ref{claim1}}
\label{app:c1}
\begin{proof}
We want OD to exhibit detection effectiveness i.e. $P(Y=1 | O=1) > P(Y=1)$.
\begin{align*}
\text{Now, } \quad P(Y=1 | O=1) = &P(PV=a | O=1) \cdot  \\& P(Y=1 | PV=a, O=1) + \\&P(PV=b| O=1)\cdot  \\& P(Y=1 | PV=b, O=1) \\
\end{align*}

Given SP, we have 
\begin{align*}
&P(O=1|PV=a) = P(O=1|PV=b)\\
\implies & P(PV=a | O=1) = P(PV=a), \text{ and } \\&P(PV=b | O=1) = P(PV=b)
\end{align*}

Therefore, we have 
\begin{align}
\begin{split}
\text{Now, } \quad P(Y=1 | O=1) = &P(PV=a)\cdot \\& P(Y=1 | PV=a, O=1) + \\&P(PV=b) \cdot  \\& P(Y=1 | PV=b, O=1) \\
\end{split}
\end{align}

Now, 
\begin{align*}
P(Y = 1) = &P(PV=a)\cdot P(Y=1 | PV=a) + \\& P(PV=b) \cdot P(Y=1 | PV=b)
\end{align*}

Therefore, if we want $P(Y=1 | O=1) > P(Y=1)$, then
\begin{align}
\label{eq:ineq}
\begin{split}
&P(PV=a)\cdot P(Y=1 | PV=a, O=1) + \\&P(PV=b) \cdot P(Y=1 | PV=b, O=1) \\
&\;\;\;\;\;\;>\\
&P(PV=a)\cdot P(Y=1 | PV=a) +  \\& P(PV=b) \cdot P(Y=1 | PV=b)
\end{split}
\end{align}

\begin{align*}
\implies  \exists v \in \{a, b\} \quad s.t.\; &P(Y=1 | PV=v, O=1) \\
&>  \\&P(Y=1 | PV=v)
\end{align*}
\end{proof}


\subsection{Proof of Claim~\ref{claim2}}
\label{app:c2}
\begin{proof}
Without loss of generality, assume that $P(Y=1 | PV=a, O=1) > P(Y=1 | PV=a)$ i.e. ( i.e. $P(Y=1 | PV=a, O=1) = K \cdot P(Y=1 | PV=a); K > 1 $), and let $\frac{P(Y=1 | PV=a) }{P(Y=1 | PV=b) } = \frac{P(Y=1 | PV=a, O=1) }{P(Y=1 | PV=b, O=1) } = \frac{1}{r}$ then \\
Case 1: When $P(Y=1 | PV=b, O=1) < P(Y=1 | PV=b)$\\

\noindent
\resizebox{\linewidth}{!}{
\begin{minipage}{\linewidth}
	\begin{align*}
	& P(Y=1 | PV=b, O=1) < P(Y=1 | PV=b)\\
	\implies &P(Y=1 | PV=b, O=1) < r \cdot P(Y=1 | PV=a)\\
	\implies &P(Y=1 | PV=b, O=1) < r \cdot P(Y=1 | PV=a, O=1), \\
	&\quad\quad\quad[\because P(Y=1 | PV=a, O=1) > P(Y=1 | PV=a)]
	\end{align*}
\end{minipage}
}

This contradicts our assumption that $ P(Y=1 | PV=b, O=1) = r \cdot P(Y=1 | PV=a, O=1)$, therefore it must be that $P(Y=1 | PV=b, O=1) \boldsymbol{\geq} P(Y=1 | PV=b)$.\\

Case 2: When $P(Y=1 | PV=b, O=1) = P(Y=1 | PV=b)$\\
\noindent
\resizebox{\linewidth}{!}{
	\begin{minipage}{\linewidth}
		\begin{align*}
		& P(Y=1 | PV=b, O=1) = P(Y=1 | PV=b)\\
		\implies &P(Y=1 | PV=b, O=1) = r \cdot P(Y=1 | PV=a)\\
		\implies &P(Y=1 | PV=b, O=1) < r \cdot P(Y=1 | PV=a, O=1), \\
		&\quad\quad\quad[\because P(Y=1 | PV=a, O=1) > P(Y=1 | PV=a)]
		\end{align*}
	\end{minipage}
}

This contradicts our assumption that $ P(Y=1 | PV=b, O=1) = r \cdot P(Y=1 | PV=a, O=1)$, therefore it must be that $P(Y=1 | PV=b, O=1) \boldsymbol{>} P(Y=1 | PV=b)$.\\

Case 3: When $P(Y=1 | PV=b, O=1) > P(Y=1 | PV=b)$ i.e. ($P(Y=1 | PV=b, O=1) = L \cdot P(Y=1 | PV=b); L > 1 $)\\
Now, we know that,
\begin{align*}
&P(Y=1 | PV=a)\cdot P(Y=1 | PV=b, O=1) \\&= P(Y=1 | PV=b) \cdot P(Y=1 | PV=a, O=1)\\
\implies & P(Y=1 | PV=a)\cdot P(Y=1 | PV=b, O=1) \\&= P(Y=1 | PV=b) \cdot K \cdot P(Y=1|PV=a)\\
\implies & P(Y=1 | PV=b, O=1) = K \cdot P(Y=1 | PV=b) \\ 
\implies & P(Y=1 | PV=b, O=1) >  P(Y=1 | PV=b)  
\end{align*}
And, for ratio to be preserved, it must be that $L = K$.

Hence, enforcing preservation of ratios implies base-rates in flagged observations are larger than their counterparts in the population.
\end{proof}

\section{Data Description}

\subsection{Synthetic data}
\label{app:syn}
We illustrate the effectiveness of \method on two synthetic datasets, namely \synthone and \synthtwo~(as illustrated in Fig.~\ref{fig:synth}). 
These datasets 
are constructed to present scenarios that mimic real-world settings, where we may have features which are uncorrelated with respect to outcome labels but partially correlated with $\pv$, or features which are correlated both to outcome labels and $\pv$. 


\begin{itemize}[leftmargin=*]
	\item \synthone: In \synthone, we simulate a 2-dimensional dataset comprised of samples $X = [x_1, x_2]$ where $x_1$ is correlated with the protected variable $\pv$, but does not offer any predictive value with respect to ground-truth outlier labels $\mathcal{\lbl}$, while $x_2$ is correlated with these labels $\mathcal{\lbl}$ (see Fig.~\ref{fig:synth}a).  
	We draw 2400 samples, of which $PV = a$ (majority) for 2000 points, and $PV = b$ (minority) for 400 points.  120 (5\%) of these points are outliers.  $x_1$ differs in terms of shifted means, but equal variances, for both majority and minority groups.  $x_2$ is distributed similarly for both majority and minority groups, drawn from a normal distribution for outliers, and an exponential for inliers.  The detailed generative process for the data is  below, and Fig. \ref{fig:synth}a shows a visual. 
	{\resizebox{0.99\columnwidth}{!}{
			\begin{tabular}{l}
				{\bf \synthone} \\
				\\ 
				Simulate samples $X = [x_1, x_2]$ by... \\
				$	\begin{aligned}[t]
					PV &\sim \text{Bernoulli}(4/5) \\
					Y &\sim \text{Bernoulli}(1/20) \\
					x_1 &\sim
					\begin{cases}
						\text{Normal}(-1, 1.44) \quad\text{if}\quad Y = 0, \;PV = 1 \quad\text{\color{blue} [a, majority; inlier]} \\
						\text{Normal}(1, 1.44) \quad\text{if}\quad Y = 0, \;PV = 0 \quad\text{\color{blue} [b, minority; inlier]}\\
						2\times \text{Exponential}(1) (1 - 2\times \text{Bernoulli}(1/2)) \quad\text{if}\quad Y = 1 \quad\text{\color{blue} [outlier]}
					\end{cases} \\
					x_2 &\sim
					\begin{cases}
						\text{Normal}(-1, 1) \quad\text{if}\quad Y = 0, \;PV = 1 \quad\text{\color{blue} [a, majority; inlier]} \\
						\text{Normal}(1, 1) \quad\text{if}\quad Y = 0, \;PV = 0 \quad\text{\color{blue} [b, minority; inlier]}\\
						2\times \text{Exponential}(1) (1 - 2\times \text{Bernoulli}(1/2)) \quad\text{if}\quad Y = 1 \quad\text{\color{blue} [outlier]}
					\end{cases}
				\end{aligned}$
			\end{tabular}
		}
	}

	\vspace{0.1in}
	\item \synthtwo: In \synthtwo, we again simulate a 2-dimensional dataset comprised of samples $X = [x_1, x_2]$ where $x_1, x_2$ are partially correlated with both the protected variable $\pv$ as well as ground-truth outlier labels $\mathcal{\lbl}$~(see Fig.~\ref{fig:synth}b). 
	We draw 2400 samples, of which $PV = a$ (majority) for 2000 points, and $PV = b$ (minority) for 400 points.  120 (5\%) of these points are outliers.  
	For inliers, both $x_1, x_2$ are normally distributed, and differ across majority and minority groups only in terms of shifted means, but equal variances. Outliers are drawn from a product distribution of an exponential and linearly transformed Bernoulli distribution (product taken for symmetry). The detailed generative process for the data is  below (right), and Fig. \ref{fig:synth}b shows a visual. 
	{\resizebox{0.8\columnwidth}{!}{
			\begin{tabular}{l}
				{\bf \synthtwo} \\
				\\ 
				Simulate samples $X = [x_1, x_2]$ by... \\
				$\begin{aligned}[t]
					PV &\sim \text{Bernoulli}(4/5) \\
					Y &\sim \text{Bernoulli}(1/20) \\
					x_1 &\sim
					\begin{cases}
						\text{Normal}(180, 10) \quad\text{if}\quad PV = 1 \quad\text{\color{blue} [a, majority]} \\
						\text{Normal}(150, 10) \quad\text{if}\quad PV = 0 \quad\text{\color{blue} [b, minority]}
					\end{cases} \\
					x_2 &\sim
					\begin{cases}
						\text{Normal}(10, 3) \quad\text{if}\quad Y = 1 \quad\text{[\color{blue} outlier]} \\
						\text{Exponential}(1) \quad\text{if}\quad Y = 0 \quad\text{\color{blue} [inlier]} \\
					\end{cases}
				\end{aligned}$
			\end{tabular}
		}
	}
	
\end{itemize}

\subsubsection{Real-world data} 
\label{app:real}
We conduct experiments on \numdatasets{} real-world datasets and select them from diverse domains that have different types of (binary) protected variables, specifically gender, age, and race. Detailed descriptions are as follows.

\vspace{0.1in}
\textbullet $\;${\bfseries Adult}~\cite{lichman2013uci} (\adult). The dataset is extracted from the $1994$ Census database where each data point represents a person. The dataset records income level of an individual along with features encoding personal information on education, profession, investment and family. In our experiments, \emph{gender} $\in$ \{\emph{male, female}\} is used as the protected variable where \emph{female} represents minority group and high earning individuals who exceed an annual income of 50,000 i.e. annual \emph{income} $ > 50,000$ are assigned as outliers ($\lbl=1$). We further downsample \emph{female} to achieve a \emph{male} to \emph{female} sample size ratio of 4:1 and ensure that percentage of outliers remains the same (at $5\%$) across groups induced by the protected variable.

\vspace{0.1in}
\textbullet $\;${\bfseries Credit-defaults}~\cite{lichman2013uci} (\credit). This is a risk management dataset from the financial domain that is based on Taiwan's credit card clients' default cases. The data records information of credit card customers including their payment status, demographic factors, credit data, historical bill and payments. Customer \emph{age} is used as the protected variable where \emph{age} $>25$ indicates the majority group and \emph{age} $\leq 25$ indicates the minority group. We assign individuals with delinquent \emph{payment status} as outliers ($\lbl=1$).  The \emph{age} $>25$ to \emph{age} $\leq 25$ imbalance ratio is 4:1 and contains $5\%$ outliers across groups induced by the protected variable.

\begin{table*}[h] 
	\caption{Evaluation measures are reported for the competing methods on the datasets presented in Appendix~\ref{app:syn}.\label{tab:appresults}}
	\subfloat[\synthone \label{tab:synth1}]{
		\centering{\resizebox{0.48\linewidth}{!}{
				\begin{tabular}{rrrrrrrrr} 
					\toprule
					& \multicolumn{2}{c}{\fr} & \multicolumn{2}{c}{\gf} & \multicolumn{2}{c}{\auc} & \multicolumn{2}{c}{\ap} \\ 
					\cmidrule(rr){2-3} \cmidrule(rr){4-5} \cmidrule(rr){6-7} \cmidrule(rr){8-9}
					Method & $\pv=a$          & $\pv=b$        & $\pv=a$     &  $\pv=b$    & $\pv=a$ &  $\pv=b$    & $\pv=a$     & $\pv=b$    \\ 
					\midrule
					\base&0.0262&0.1282&1.0&1.0&0.9594&0.9168&0.8819&0.5849\\
					\rw&0.033&0.135&0.9299&0.9309&0.9794&0.9168&0.8819&0.5849\\
					\disparate&0.0445&0.0775&0.3953&0.9281&0.9742&0.9138&0.8814&0.7529\\
					\lfr&0.0330&0.1350&0.9299&0.9309&0.9794&0.9168&0.8819&0.5849\\
					\crl&0.0520&0.0400&0.9136&0.3955&0.9786&0.5565&0.886&0.1842\\
					\method&0.0500&0.0500&0.9639&0.9671&0.9666&0.9634&0.8166&0.7557\\
					\addlinespace
					\corr&0.0495&0.0525&0.9149&0.9295&0.9017&0.8714&0.599&0.5214\\
					\corrscore&0.0480&0.0600&0.8929&0.9082&0.9499&0.9284&0.7542&0.6501\\
					\bottomrule
	\end{tabular}}}}
	\subfloat[\synthtwo \label{tab:synth2}]{
		\centering{\resizebox{0.48\linewidth}{!}{
				\begin{tabular}{rrrrrrrrr} 
					\toprule
					& \multicolumn{2}{c}{\fr} & \multicolumn{2}{c}{\gf} & \multicolumn{2}{c}{\auc} & \multicolumn{2}{c}{\ap} \\ 
					\cmidrule(rr){2-3} \cmidrule(rr){4-5} \cmidrule(rr){6-7} \cmidrule(rr){8-9}
					Method & $\pv=a$          & $\pv=b$        & $\pv=a$     &  $\pv=b$    & $\pv=a$ &  $\pv=b$    & $\pv=a$     & $\pv=b$    \\ 
					\midrule
					\base&0.0361&0.0811&1.0&1.0&0.6153&0.5464&0.273&0.2335\\
					\rw&0.0205&0.1975&0.9242&0.6313&0.7544&0.5586&0.3973&0.2064\\
					\disparate&0.0465&0.0675&0.4224&0.9164&0.7892&0.7089&0.3921&0.317\\
					\lfr&0.0205&0.1975&0.9242&0.6313&0.7544&0.5586&0.3973&0.2064\\
					\crl&0.0520&0.0400&0.1801&0.1386&0.9786&0.5165&0.886&0.1842\\
					\method&0.0500&0.0500&0.9339&0.9201&0.6357&0.6419&0.2726&0.2918\\
					\addlinespace
					\corr&0.0500&0.0500&0.8984&0.8843&0.6385&0.6472&0.2742&0.2838\\
					\corrscore&0.0450&0.0750&0.8997&0.9095&0.5957&0.5419&0.2665&0.2339\\
					\bottomrule
	\end{tabular}}}}\\
	\subfloat[\adult \label{tab:adult}]{
		\centering{\resizebox{0.48\linewidth}{!}{
				\begin{tabular}{rrrrrrrrr} 
					\toprule
					& \multicolumn{2}{c}{\fr} & \multicolumn{2}{c}{\gf} & \multicolumn{2}{c}{\auc} & \multicolumn{2}{c}{\ap} \\ 
					\cmidrule(rr){2-3} \cmidrule(rr){4-5} \cmidrule(rr){6-7} \cmidrule(rr){8-9}
					Method & $\pv=a$          & $\pv=b$        & $\pv=a$     &  $\pv=b$    & $\pv=a$ &  $\pv=b$    & $\pv=a$     & $\pv=b$    \\ 
					\midrule
					\base&0.0358&0.0433&1.0&1.0&0.6344&0.6449&0.1105&0.0898\\
					\rw&0.0515&0.0391&0.8399&0.8479&0.6323&0.6351&0.1303&0.1141\\
					\disparate&0.0515&0.0391&0.9299&0.9309&0.6323&0.6351&0.1303&0.1141\\
					\lfr&0.0515&0.0391&0.8099&0.8099&0.6323&0.6351&0.1303&0.1141\\
					\crl&0.0507&0.0444&0.9147&0.5765&0.5951&0.6009&0.0987&0.0848\\
					\method&0.0497&0.0511&0.9646&0.9616&0.6374&0.6404&0.1085&0.0912\\
					\addlinespace
					\corr&0.0513&0.0403&0.9178&0.9005&0.6425&0.6312&0.1213&0.1048\\
					\corrscore&0.0527&0.0302&0.8119&0.7877&0.6533&0.6229&0.1872&0.1435\\
					\bottomrule
	\end{tabular}}}}
	\subfloat[\credit \label{tab:credit}]{
		\centering{\resizebox{0.48\linewidth}{!}{
				\begin{tabular}{rrrrrrrrr} 
					\toprule
					& \multicolumn{2}{c}{\fr} & \multicolumn{2}{c}{\gf} & \multicolumn{2}{c}{\auc} & \multicolumn{2}{c}{\ap} \\ 
					\cmidrule(rr){2-3} \cmidrule(rr){4-5} \cmidrule(rr){6-7} \cmidrule(rr){8-9}
					Method & $\pv=a$          & $\pv=b$        & $\pv=a$     &  $\pv=b$    & $\pv=a$ &  $\pv=b$    & $\pv=a$     & $\pv=b$    \\ 
					\midrule
					\base&0.0445&0.064&1.0&1.0&0.7376&0.7512&0.1938&0.1582\\
					\rw&0.0467&0.06627&0.8399&0.8409&0.7376&0.7512&0.1938&0.1582\\
					\disparate&0.0467&0.06627&0.6899&0.6809&0.7376&0.7512&0.1938&0.1582\\
					\lfr&0.0467&0.06627&0.7299&0.7309&0.7376&0.7512&0.1938&0.1582\\
					\crl&0.0471&0.0645&0.5533&0.6118&0.7242&0.7263&0.1396&0.1054\\
					\method&0.0468&0.066&0.9235&0.9421&0.7368&0.7494&0.2134&0.1725\\
					\addlinespace
					\corr&0.0475&0.062&0.7147&0.6564&0.7276&0.7394&0.1246&0.1025\\
					\corrscore&0.0467&0.0662&0.7871&0.8029&0.7327&0.7484&0.1333&0.1091\\
					\bottomrule
		\end{tabular}}}
	}\\
	\subfloat[\tweets\label{tab:tweets}]{
		\centering{\resizebox{0.48\linewidth}{!}{
				\begin{tabular}{rrrrrrrrr} 
					\toprule
					& \multicolumn{2}{c}{\fr} & \multicolumn{2}{c}{\gf} & \multicolumn{2}{c}{\auc} & \multicolumn{2}{c}{\ap} \\ 
					\cmidrule(rr){2-3} \cmidrule(rr){4-5} \cmidrule(rr){6-7} \cmidrule(rr){8-9}
					Method & $\pv=a$          & $\pv=b$        & $\pv=a$     &  $\pv=b$    & $\pv=a$ &  $\pv=b$    & $\pv=a$     & $\pv=b$    \\ 
					\midrule
					\base&0.0369&0.1015&1.0&1.0&0.5739&0.5476&0.061&0.0539\\
					\rw&0.0479&0.0571&0.2882&0.3312&0.5583&0.582&0.0466&0.0334\\
					\disparate&0.0494&0.0507&0.388&0.4178&0.5552&0.5307&0.0454&0.0345\\
					\lfr&0.0479&0.0571&0.4082&0.4422&0.5583&0.582&0.0466&0.0334\\
					\crl&0.0482&0.0558&0.5432&0.5762&0.4912&0.5146&0.0504&0.0442\\
					\method&0.0488&0.0532&0.9668&0.9671&0.569&0.5699&0.0617&0.0617\\
					\addlinespace
					\corr&0.0331&0.1167&0.9137&0.8986&0.5091&0.4237&0.0574&0.0425\\
					\corrscore&0.0501&0.0488&0.6753&0.6903&0.5592&0.5891&0.0627&0.1002\\
					\bottomrule
		\end{tabular}}}
	}
	\subfloat[\ads \label{tab:ads}]{
		\centering{\resizebox{0.48\linewidth}{!}{
				\begin{tabular}{rrrrrrrrr} 
					\toprule
					& \multicolumn{2}{c}{\fr} & \multicolumn{2}{c}{\gf} & \multicolumn{2}{c}{\auc} & \multicolumn{2}{c}{\ap} \\ 
					\cmidrule(rr){2-3} \cmidrule(rr){4-5} \cmidrule(rr){6-7} \cmidrule(rr){8-9}
					Method & $\pv=a$          & $\pv=b$        & $\pv=a$     &  $\pv=b$    & $\pv=a$ &  $\pv=b$    & $\pv=a$     & $\pv=b$    \\ 
					\midrule
					\base&0.0286&0.0318&1.0&1.0&0.7077&0.7234&0.2555&0.2124\\
					\rw&0.0491&0.0523&0.8236&0.7813&0.7286&0.7672&0.4227&0.5183\\
					\disparate&0.0491&0.0523&0.6236&0.5813&0.7286&0.7672&0.4296&0.5253\\
					\lfr&0.0491&0.0523&0.7236&0.6813&0.7286&0.7672&0.4257&0.5253\\
					\crl&0.0499&0.0500&0.5028&0.2181&0.6572&0.6487&0.0885&0.0525\\
					\method&0.0499&0.0500&0.9698&0.9699&0.7179&0.7216&0.2592&0.2163\\
					\addlinespace
					\corr&0.0683&0.0588&0.5551&0.8684&0.7179&0.7345&0.0005&0.0005\\
					\corrscore&0.0499&0.0500&0.6611&0.6966&0.7007&0.7251&0.2636&0.2455\\
					\bottomrule
		\end{tabular}}}
	}
\end{table*}

\vspace{0.1in}
\textbullet $\;${\bfseries Abusive Tweets}~\cite{blodgett2016demographic} (\tweets). The dataset is a collection of Tweets along with annotations indicating whether a tweet is abusive or not. 
The data are not annotated with any protected variable by default; therefore, to assign protected variable to each Tweet, we employ the following process: We predict the racial dialect --- \emph{African-American}  or \emph{Mainstream} --- of the tweets in the corpus using the language model proposed by ~\cite{blodgett2016demographic}. The dialect is assigned to a Tweet only when the prediction probability is greater than 0.7, and then the predicted \emph{racial dialect} is used as protected variable where \emph{African-American} \emph{dialect} represents the minority group. In this setting, abusive tweets are labeled as outliers ($\lbl=1$) for the task of flagging abusive content on Twitter. 
The group sample size ratio of \emph{racial dialect} = \emph{African-American} to \emph{racial dialect} = \emph{Mainstream} is set to 4:1. We further sample data points to ensure equal percentage~($5\%$) of outliers across dialect groups.\looseness=-1

\vspace{0.1in}
\textbullet $\;${\bfseries Internet ads}~\cite{lichman2013uci} (\ads). This is a collection of possible advertisements on web-pages. The features characterize each ad by encoding phrases occurring in the ad URL, anchor text, alt text, and encoding geometry of the ad image. We assign observations with class label \emph{ad} as outliers  ($\lbl=1$)
and downsample the data to get an outlier rate of 5\%. There exists no demographic information available, therefore we simulate a binary protected variable by randomly assigning each observation to one of two values (i.e. groups) $\in \{0, 1\}$ such that the group sample size ratio is 4:1.\looseness=-1

\section{Hyperparameters} 
\label{app:hyperparam}
We choose the hyperparameters of \method from $\alpha \in \{0.01, 0.5, 0.9\} \times \gamma \in \{ 0.01, 0.1, 1.0 \}$ by evaluating the Pareto curve for fairness and group fidelity criteria. The \ignore{\inm}\base and \method methods both use an auto-encoder with two hidden layers. We fix the number of hidden nodes in each layer to $2$ if $d \leq 100$, and $8$ otherwise. 
The representation learning methods \lfr and \crl use the model configurations as proposed by their authors. The hyperparameter grid for the \prem baselines are set as follows: 
$repair\_level \in \{0.0001, 0.001, 0.01, 0.1, 1.0\}$ for \disparate, $A_z\in \{0.0001, 0.001, 0.01, 0.1, 0.9\}$ and $A_x = 1-A_z$ for \lfr, and $\lambda \in \{0.0001, 0.001, 0.01, 0.1, 0.9\}$ for \crl. We pick the best model for the \prem baselines using \fairness 
as they only optimize for statistical parity. The best \base model is selected based on reconstruction error through cross validation upon multiple runs with different random seeds. 


\section{Supplemental Results}
\label{app:exp}
In this section, we report \fr, \gf, \auc and \ap  (see Table~\ref{tab:appresults}) for the competing methods on a set of datasets (see Appendix~\ref{app:real}) w.r.t groups induced by $\pv = v; \;v \in \{a, b\}$ to supplement the experimental results presented in Sec.~\ref{sec:exp}. Notice that in most cases~(see Table~\ref{tab:synth1} through Table~\ref{tab:ads}), \method outperforms the \base model on label-aware parity metrics (\prratio, \apratio) and, furthermore, outperforms \base on at least one of the performance metrics (e.g. \auc, \ap); fairness need not imply worse OD performance.\looseness=-1


\end{document}